\documentclass[10pt,twocolumn,letterpaper]{article}

\usepackage[accsupp]{axessibility}
\usepackage[pagenumbers]{cvpr} 

\DeclareMathOperator{\atantwo}{atan2}

\usepackage{indentfirst}
\usepackage{adjustbox}
\usepackage{multirow}

\usepackage{amsmath}
\usepackage{amssymb}
\usepackage{mathtools}
\usepackage{amsthm}
\usepackage{nicefrac}

\theoremstyle{plain}
\newtheorem{theorem}{Theorem}[section]

\newtheorem{proposition}[theorem]{Proposition}

\newcommand{\STAB}[1]{\begin{tabular}{@{}c@{}}#1\end{tabular}}

\usepackage{algorithm}
\usepackage{algpseudocode}
\algrenewcommand\algorithmicrequire{\textbf{Input:}}
\algrenewcommand\algorithmicensure{\textbf{Output:}}

\definecolor{cvprblue}{rgb}{0.21,0.49,0.74}
\usepackage[pagebackref,breaklinks,colorlinks,allcolors=cvprblue]{hyperref}

\def\paperID{41315} 
\def\confName{CVPR}
\def\confYear{2026}

\title{Learning Coordinate-based Convolutional Kernels for\\Continuous SE(3) Equivariant and Efficient Point Cloud Analysis}

\author{
Jaein Kim$^{1}$,\quad\,Hee Bin Yoo$^{2}$\thanks{The majority of the work for this publication was done while these authors were in Seoul National University.}\;\,,\quad\,Dong-Sig Han$^{3}$\footnotemark[1]\;\,,\quad\,Byoung-Tak Zhang$^{1,4}$\\
$^{1}$Interdisciplinary Program in Neuroscience, Seoul National University \\
$^{2}$Département d’Informatique, École Normale Supérieure (ENS) \\
$^{3}$Department of Computing, Imperial College London \\
$^{4}$Department of Computer Science and Engineering, Seoul National University \\
{\tt\small \{jykim, btzhang\}@bi.snu.ac.kr}
}

\begin{document}
\maketitle
\begin{abstract}
A symmetry on rigid motion is one of the salient factors in efficient learning of 3D point cloud problems.
Group convolution has been a representative method to extract equivariant features, but its realizations have struggled to retain both rigorous symmetry and scalability simultaneously.
We advocate utilizing the intertwiner framework to resolve this trade-off,
but previous works on it, which did not achieve complete SE(3) symmetry or scalability to large-scale problems, necessitate a more advanced kernel architecture.
We present Equivariant Coordinate-based Kernel Convolution, or ECKConv.
It acquires SE(3) equivariance from the kernel domain defined in a double coset space, and its explicit kernel design using coordinate-based networks enhances its learning capability and memory efficiency.
The experiments on diverse point cloud tasks, e.g., classification, pose registration, part segmentation, and large-scale semantic segmentation, validate the rigid equivariance, memory scalability, and outstanding performance of ECKConv compared to state-of-the-art equivariant methods.
\end{abstract}

\section{Introduction}
Modern deep learning on point clouds has been dominating various tasks and applications in the 3D vision domain~\cite{guo2020deep,ioannidou2017deep,fei2024rotation}.
Yet, it has often exploited the prior that an input is aligned to the canonical pose, deteriorating its feasibility to the real-world problems~\cite{fei2024rotation}.
While training with data augmentation induces transformation robustness,
many studies~\cite{fei2024rotation,muller2021rotation,gerken2022equivariance,brehmer2025does} have shown that the reliance on augmentation is inefficient or underperforms compared to the model retaining symmetry in its architecture.
Thus, designing pose-equivariant model has been a significant topic in the 3D deep learning.
Group convolution is one of the representative approaches to grant such symmetry,
which derives the analogy from the translation equivariance of standard convolution towards the equivariance to group actions~\cite{cohen2016group}.

\begin{figure}[t]
    \centering
    \includegraphics[trim={0.8cm 0.3cm 0.3cm 0.3cm}, clip, width=0.94\columnwidth]{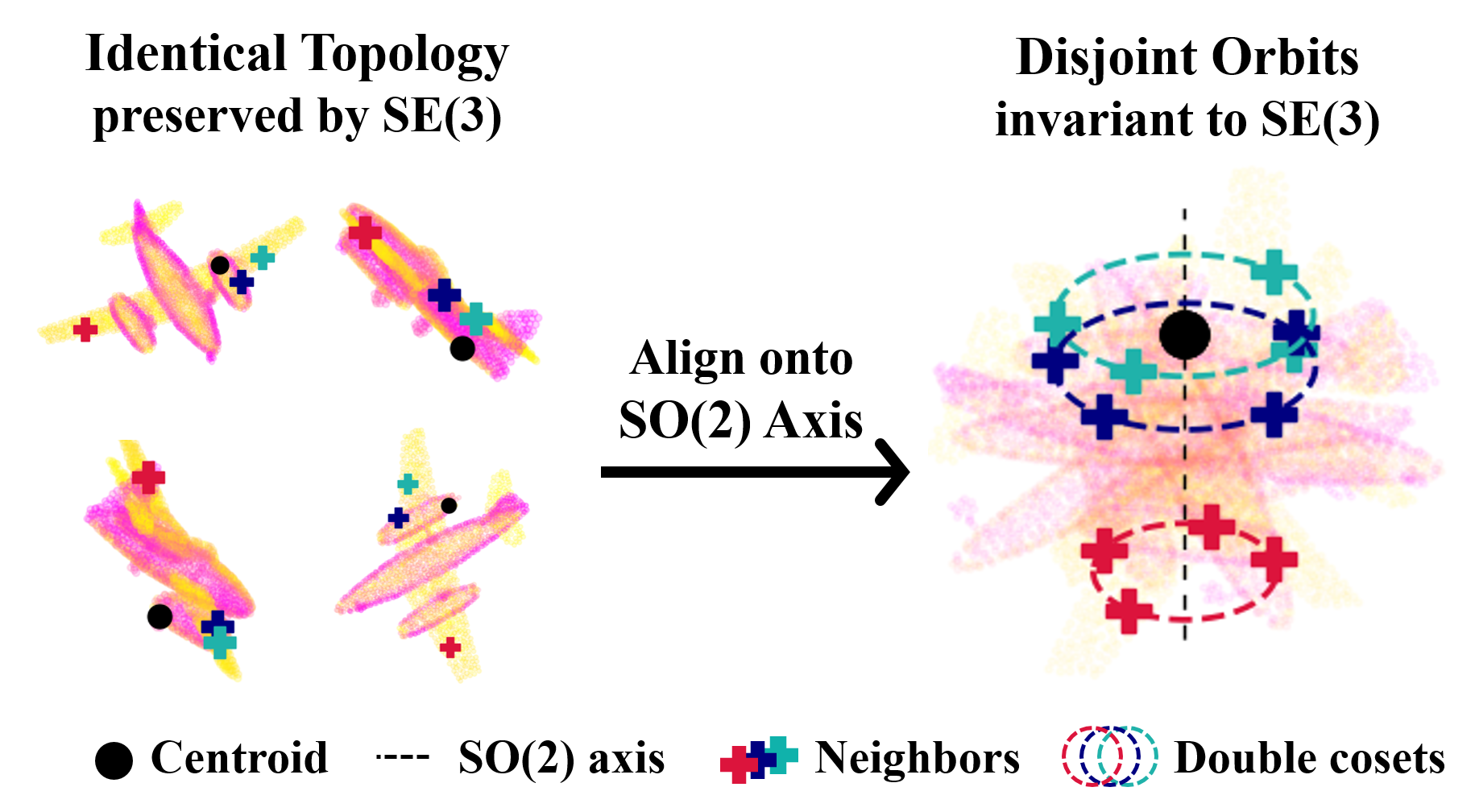}
    \caption{On the left, random SE(3) transformations are applied on the identical object point cloud. The black dots are centroids and colored plus signs designate neighbor points, where they are identical points with different poses. If these point clouds are transformed as the reference points are met on the subgroup, or the SO(2) axis, it is guaranteed that points with the identical topology lie on the disjoint orbit, or \textit{double coset}, around the axis. Therefore, the SE(3) equivariant operation is achievable by utilizing the unique parameter defining those orbits. We visualized this concept with an example drawn from the ModelNet40~\cite{wu20153d}.}
    \vspace{-1.0em}
    \label{fig:intro}
\end{figure}

Group convolution on the 3D point clouds has the trade-off between the strict equivariance and model scalability~\cite{chen2021equivariant,zhu2023e2pn}.
On the one hand, one can exploit a discrete group structure and directly expand kernel parameters to achieve scalability with restricted operations.
However, it causes a discrepancy between the model and the continuity of groups, \eg, discretizing 3D rotations from a compact SO(3) space~\cite{finzi2020generalizing}.
On the other hand, strict equivariance for continuous group spaces is guaranteed with the steerable convolution framework~\cite{cohen2016steerable},
which confines its features and kernels decomposable into the irreducible features and the direct sum of maps between them~\cite{thomas2018tensor,fuchs2020se}.
Such complex operations typically require expensive computational costs,
restraining its applicability to large-scale 3D tasks.

One of the extended framework of group convolution, the \textit{intertwiner} kernel convolution~\cite{cohen2018intertwiners,cohen2019general}, has been proposed to substitute the domain space from group to \textit{quotient} space, which is more plausible to represent the operation over the physical space.
Indeed, previous works based on this framework suggest more efficient discrete SE(3) equivariant convolution~\cite{zhu2023e2pn} or embed symmetry to continuous SO(3) actions~\cite{kim2024continuous} using \textit{implicit kernel}~\cite{romero2021ckconv,romero2021flexconv,zhdanov2024implicit}.
However, a continuous SE(3) equivariant method has not been presented within the \textit{intertwiner} framework, and their applications are not scalable enough to cover the large-scale 3D problems in the real-world.

This paper proposes Equivariant Coordinate-based Kernel Convolution (\textbf{ECKConv}), a continuous SE(3) equivariant convolutional network with scalability using \textit{intertwiner} kernel framework. 
We configure the domain of kernel to embed SE(3) symmetry and parse the symmetric parameters from the normalized local neighbors using coordinate-based networks.
With the \textit{explicit kernel} design, the parsed information outputs linear maps equivariant to complete SE(3) with the enhanced local features and the scalability in terms of memory efficiency.
Our method is verified through various translation/rotation invariant tasks and validates its symmetry on continuous rigid motions and practicality.

\noindent Our contributions are summarized as follows:
\begin{itemize}
    \item We introduce \textbf{ECKConv} which adopts the \textit{interwtiner} framework and computes the convolutional kernel from the parameters of a double coset space. It ensures a rigid symmetry against SE(3) from these parameters which corresponds to motion-invariant orbits as \Cref{fig:intro}.
    \item ECKConv extracts normalized double coset parameters from the local neighbors and computes the \textit{explicit kernel} maps using coordinate-based networks. It develops an enhanced learning of local geometries and the scalability of our method. We prove the efficiency of explicit kernel with respect to the backpropagation.
    \item We verify the strict SE(3) equivariance and memory scalability of ECKConv through the classification and pose registration tasks in ModelNet40~\cite{wu20153d}.
    The part segmentation in ShapeNet~\cite{chang2015shapenet} and the semantic segmentation in S3DIS~\cite{armeni20163d} also validate its advanced local feature learning and practicability to the large-scale problems.
\end{itemize}
\section{Related Works}
\subsection{Deep Learning for 3D Point Cloud}
PointNet~\cite{qi2017pointnet} and PointNet++~\cite{qi2017pointnet++} are early representative methods in point cloud deep learning.
They introduce a weight-sharing MLP, \ie, a one-dimensional convolution, to learn point features,
and PointNet++ especially utilizes local grouping and multi-scale (or resolution) mechanisms to enhance local geometry learning.
Despite such techniques, they are ineffective in aggregating local features, relying on averaging or pooling among grouped points.

Convolution methods in point cloud learning retain the inductive bias of locality in every layer while coping with irregularly distributed samples in the cloud.
Some early methods such as PointCNN~\cite{li2018pointcnn} and PointConv~\cite{wu2019pointconv} project distributed local features into regular features by learnable permutations or estimated density.
KPConv~\cite{thomas2019kpconv} defines its kernel for irregular grid as the combination of weight bases, whose coefficients are gated by the distance between input and anchor coordinates.
DGCNN~\cite{wang2019dynamic} constructs a local cluster at each point by K-NN in the latent feature space, which aggregates point features independently to the distance in the coordinate space.
It is also noteworthy that some studies adopt attention mechanisms and dynamically adjust the adjacency among point features~\cite{guo2021pct,zhao2021point,wu2022point,wu2024point}.
Yet, they are dependent on the data augmentations to induce symmetry to rigid motions only acting globally.


\subsection{Group Equivariant Point Cloud Convolution}
Group convolution is founded on either discrete group or steerable convolution,
rooted from GCNN~\cite{cohen2016group} and Steerable CNN~\cite{cohen2016steerable}.
TFN~\cite{thomas2018tensor}
decomposes the action of SO(3) and parametrizes the combination of equivariant bases to receive coordinate differences between points.
\citet{fuchs2020se} attach the attention mechanism among neighbors in TFN,
where query and key are also the direct sum of learnable equivariant bases.
As these steerable methods are not scalable to practical 3D vision problems, EPN~\cite{chen2021equivariant} and E2PN~\cite{zhu2023e2pn} suggest discrete group methods based on KPConv~\cite{thomas2019kpconv},
prescribing the assignment between discrete group or quotient space elements to replace the distance-based gate function.
They show a capacity comparable to non-equivariant methods,
but their symmetry is biased to the discretized rotations due to their attentive pooling over group dimensions.
Recent studies have aimed to address both limitations in the previous works.
CSEConv~\cite{kim2024continuous} suggests an implicit kernel invariant to continuous SO(3) by mapping double coset coordinates with learnable networks.
\citet{weijler2025efficient} propose a stochastic and efficient SE(3) convolution with lifted and sampled local coordinates.
However, the symmetry of CSEConv is restricted to SO(3), and their performance in object-level analysis is limited compared to existing point cloud methods.

\subsection{Equivariance outside Group Convolution}
It is possible to implement symmetric networks outside the group convolution framework to avoid its trade-off.
Some studies suggest model-agnostic frameworks that embed group equivariance into non-equivariant networks.
For instance, Vector Neuron~\cite{deng2021vector} suggests SO(3) equivariant framework that augments a feature space into a bundle of 3D vectors.
Frame Averaging~\cite{puny2022frame} proposes a theoretical framework that approximate integration over group into averaging over equivariant \textit{frame}, the finite subset of the group.
\citet{kaba2023equivariance} and \citet{mondal2023equivariant} introduce to canonicalize the input orientation via additional equivariant networks.
Although they retain the capacity of their foundation networks, they have limited symmetry to global SO(3) actions~\cite{deng2021vector}, or require high computational costs, pretrained networks~\cite{kaba2023equivariance}, or the prior to the input orientation~\cite{mondal2023equivariant}.

Meanwhile, there are studies that extract rotation invariant coordinate representations among point cloud clusters~\cite{zhang2019rotation,zhang2022riconv++,su2025ri}.
They have a similarity to ours in utilizing information invariant to rigid transformations during the operation.
However, their heuristic design and selection of such information make these methods susceptible to local references~\cite{zhang2019rotation}, utilize ambiguous representations to a certain situation~\cite{zhang2022riconv++}, or rely on the assumption that global transformation is equally applied to every local region~\cite{su2025ri}.
These limitations necessitate the study that addresses the trade-off by the group convolution within its field.

\section{Group Equivariant Convolution Framework using Intertwiner Kernel}
This section outlines the mathematical framework and the motivation of our architectural design.
We refer the readers to the previous works~\cite{cohen2018intertwiners,cohen2019general,zhu2023e2pn,kim2024continuous} for a better understanding of detailed derivations.
A group convolution is naturally derived from the standard convolution, simply substituting its symmetry against shifts into inverse actions closed on the group space~\cite{cohen2016group,cohen2019general}:
\begin{equation}\label{eq:basic_gconv}
(f \ast \kappa)(g)=\int_{G}\,\kappa(g^{-1}h)f(h)dh,
\end{equation}
where $f$ is an input feature, $\kappa$ is a kernel, and $g$ is a coordinate of point in the group space.
This expression is sufficient for modelling the equivariance to discrete groups.
However, real-world problems often require the symmetry to continuous groups,
and a naive integration on continuous group space is computationally intractable for data acquired from a physical space.

\citet{cohen2018intertwiners, cohen2019general} propose the framework that models the group equivariant convolution as a general linear map between fields by introducing the intertwiner $\text{Hom}(V_1, V_2)$ to the kernel function.
$V_1$ and $V_2$ are fields of the input and output feature spaces, whose coordinates are from a homogeneous space.
Notably, the convolution over group can be replaced into the equivalent operation over a quotient space, such as a 3D sphere, with $\kappa: G/H_1 \to \text{Hom}(V_1, V_2)$:
\begin{equation}\label{eq:quotient_conv}
\begin{gathered}
(f \ast \kappa)(x)=\\
\int_{G/H_1}\,\kappa(s_2(x)^{-1}y)\rho_1(\mathrm{h}_1(y, s_2(x)^{-1}))f(y)dy,
\end{gathered}
\end{equation}
where $\rho_1: G \to \text{GL}(V_1)$ is a representation, $s_2: G/H_2 \to G$ is a section map for group $G$ and subgroup $H_2$, and $\mathrm{h}_1(x, g)=s_1(gx)^{-1}gs_1(x)$ for $g \in G$, a section map $s_1$ for subgroup $H_1$, and $x \in G/H_{1}$.
For convenience, when $G=SO(3)$ and $H_2=SO(2)$, assume that $s_2(x)^{-1}$ transforms $x$ to the reference of $H_2$, \eg, the north pole.
The constraint on kernel $\kappa(\cdot)$ is required to retain the equivariance over group $G$ as \Cref{eq:constraint}, where the kernel preserves the symmetry to actions of subgroup $H_2$:
\begin{equation}\label{eq:constraint}
    \kappa(hy) = \rho_2(h)\kappa(y)\rho_1(\mathrm{h}_1(y,h)^{-1}),\;h \in H_2.
\end{equation}
\citet{cohen2018intertwiners, cohen2019general} also derive that $\kappa(\cdot)$ with \Cref{eq:constraint} has an equivalent function whose domain is in double coset space, where double cosets are disjoint bundles of quotient space elements transitive to $H_2$ actions.
To concretize $G$-equivariant operation, previous works~\cite{zhu2023e2pn,kim2024continuous} specify some configurations that $H_1\!=\!H_2\!=\!\text{SO(2)}$ and  $\rho_1\!=\!\rho_2\!=\!\text{id}$, assuming that $V_1$ and $V_2$ are invariant.
Consequently, the kernel function becomes an unconstrained function as:
\begin{equation}\label{eq:constraint_equivalent}
\begin{gathered}
(f \ast \kappa)(x)=\int_{G/H_1}\,\kappa(s_2(x)^{-1}y)f(y)dy,\\
\text{s.t.}\;\;\kappa: H_2 \backslash G / H_1 \to \text{Hom}(V_1, V_2),
\end{gathered}
\end{equation}
where the map from quotient space to double coset space is omitted with an abuse of notation.
Those previous methods, however, simplified the operation to the integration on the discretized $\text{SE(3)}/\text{SO(2)}$ quotient space~\cite{zhu2023e2pn} or confined symmetry to continuous SO(3)~\cite{kim2024continuous}.

\begin{figure*}[ht]
    \centering
    \begin{subfigure}[b]{0.35\textwidth}
    \centering
    \includegraphics[width=0.95\textwidth]{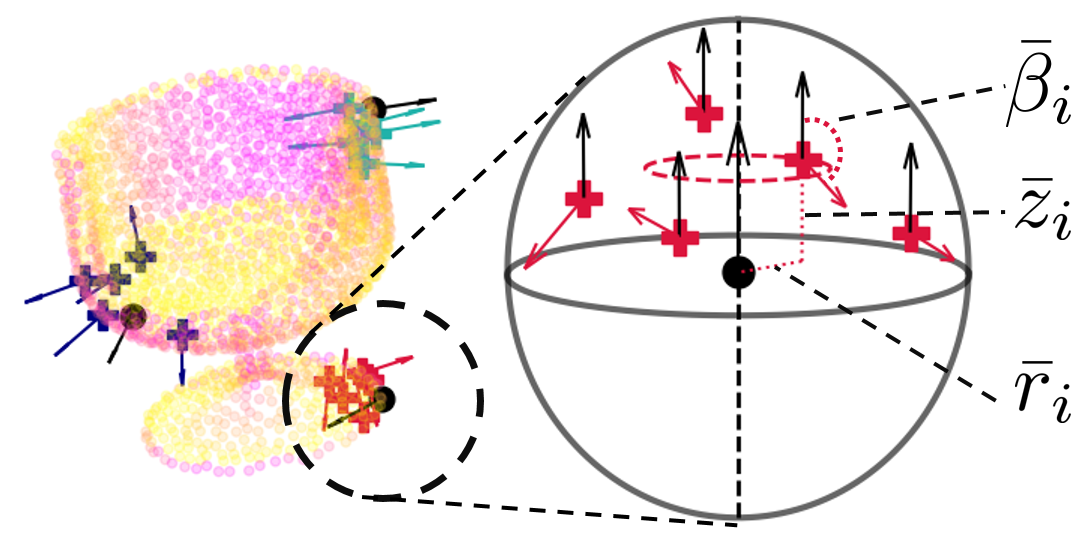}
    \caption{\label{fig:grouping_to_dc}}
    \end{subfigure}
    \hfil
    \begin{subfigure}[b]{0.6\textwidth}
    \centering
    \includegraphics[width=\textwidth]{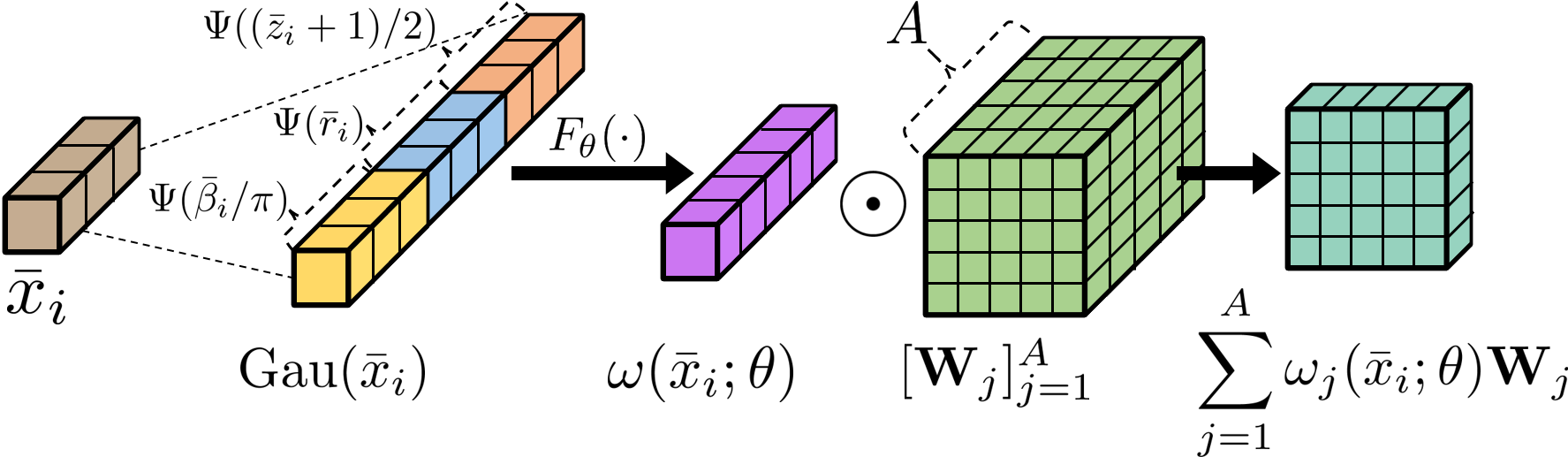}
    \caption{\label{fig:filter_arch}}
    \end{subfigure}
    \caption{The implementation details in ECKConv. (a) The neighbor points within a radius $r$ around each centroid are sub-sampled by ball query~\cite{qi2017pointnet++}. After they are aligned by the inverse of section map as \Cref{eq:cse2conv_op}, $\bar{\beta}_g$, $\bar{r}_g$, and $\bar{z}_g$, which correspond to the double coset parameters, are acquirable as depicted. These neighbor points are actually computed from the ModelNet40~\cite{wu20153d} object with normal vectors.
    (b) The computation of explicit kernel in ECKConv. First, it maps a double coset parameter $\bar{x}_i=[\bar{\beta}_i, \bar{r}_i, \bar{z}_i]$ into Gaussian embedding $[\Psi(\nicefrac{\bar{\beta}_i}{\pi}),\Psi(\bar{r}_i),\Psi(\nicefrac{(\bar{z}_i+1)}{2})]$~\cite{zheng2021rethinking,zheng2022trading}. Then the embedding is projected by $F_\theta$ to $\omega(\bar{x};\theta)$, a coefficient vector with the dimension $A$.
    The kernel value of $\kappa(\cdot)$ is gained from the weighted summation of learnable bases $[\mathbf{W}_{j}]^{A}_{j=1}$ by $\omega(\bar{x}_{i};\theta)$, \ie, $\sum^{A}_{j=1}\omega_{j}(\bar{x}_{i};\theta)\mathbf{W}_j$.
    }\label{fig:impl_change}
\end{figure*}
\section{Method}
This section delineates the methodology of ECKConv.
First, we briefly mention how the domain of kernel is defined for the SE(3) equivariance of convolution.
Then, we explain how the explicit kernel architecture computes our kernel value from double cosets using coordinate-based networks with the theoretical analysis on its scalability. 
Finally, we suggest the convolution block and its residual architecture for solving the point cloud problems.

\subsection{Continuous SE(3) Equivariant Convolution}\label{sec:se3_equiv_conv}
ECKConv expands the double coset space (\ref{eq:constraint_equivalent}) equipped with rotation and translation symmetry by configuring $G$ into $\text{SE(3)}=\text{SO(3)}\ltimes\mathbb{R}^3$.
We adopt the derivation by \citet{zhu2023e2pn}; an element $g$ in the double coset space is expressible as the pair of ZYZ Euler angles and translation when $G=\text{SE(3)}$ and $H_1\!=\!H_2\!=\!\text{SO(2)}$.
Let us denote that $g \equiv \big(R_z(\alpha_g)R_y(\beta_g)R_z(\gamma_g),\;[x_g,y_g,z_g]\big) \in G$, $h_1 = R_z(\gamma_{h_1}) \in H_1$, and $h_2 = R_z(\gamma_{h_2}) \in H_2$.
Then, we can write a double coset element $g$ as follows:
\begin{equation}\label{eq:dcp}
\begin{gathered}
    H_2gH_1 = \Big\{ \big(R_z(\alpha_g+\gamma_{h_2})R_y(\beta_g)R_z(\gamma_g+\gamma_{h_1}),\\ R_z(\gamma_{h_2}+\gamma_t)[r_g,0,z_g]^\top\big)\,\big|\,\forall\;h_1, h_2 \in \text{SO(2)}\Big\},
\end{gathered}
\end{equation}
where $\gamma_t\!=\!\atantwo(y_g,x_g)$ and $r_g\!=\!\sqrt{x^2_g+y^2_g}$ denote displacements.
These notations are useful since the whole set grouped by Z-axis rotations from $\text{SO(2)}$ is an element of double coset.
To summarize, a double coset element in $\text{SO(2)}\backslash\text{SE(3)}/\text{SO(2)}$ are uniquely representable as $[\beta_g, r_g, z_g]$,
and the convolution equivariant to continuous SE(3) can be built with the kernel depends on them.

\subsection{Implementations of ECKConv}\label{sec:scalable_impl}
We realize the framework in \Cref{sec:se3_equiv_conv} actually equivariant to continuous SE(3) actions unlike the previous work symmetric to discrete SE(3)~\cite{zhu2023e2pn}.
Let us suppose the locality assumption and the integration approximation rule from \citet{kim2024continuous}.
Then, the formulation of ECKConv is written as follows:
\begin{equation}\label{eq:cse2conv_op}
\begin{gathered}
    (f \ast \kappa)(x)=\sum_{x_i \in \mathcal{N}(x)}\kappa(s(x)^{-1}x_i)f(x_i),\\
    \text{s.t.}\;\;\kappa: \text{SE(3)}/\text{SO(2)} \to \mathbb{R}^{C_\text{out}\,\times\,C_\text{in}},\\
    f: \text{SE(3)}/\text{SO(2)} \to \mathbb{R}^{C_\text{in}\,\times\,1},
\end{gathered}
\end{equation}
where $s:SE(3)/SO(2) \to SE(3)$ is a section map.
The domain space of \Cref{eq:cse2conv_op} is factorizable into $\mathbb{R}^3 \times S^2$ since $\text{SE(3)}=\text{SO(3)}\ltimes\mathbb{R}^3$ and $\text{SO(3)}/\text{SO(2)}\equiv S^2$.
The coordinates of point clouds naturally represent $\mathbb{R}^3$.
The $S^2$ space is manifestible with any representations equivariant to SE(3) that map 3D coordinates to unit vectors. For instance, we can use either the surface normal vectors or heuristic augmentation as \Cref{alg:supp_coset_gen} in the supplementary material.

\subsubsection{Local Neighboring and Double Coset Encoding}
As we augment the domain into $\mathbb{R}^3 \times S^2$, let us assume a point cloud is composed of tuples of coordinate and normal vector $(\mathbf{x}, \mathbf{n})$.
Then, the local neighborhood, \ie, the region where we integrate, is determined by a ball query~\cite{qi2017pointnet++}, which is defined as $\mathcal{N}((\mathbf{x},\mathbf{n});r, k)=\{(\mathbf{x}_i, \mathbf{n}_i) \in \mathcal{P}\,|\,\|\mathbf{x} - \mathbf{x}_i\|_{2} \le r,\;i=1, ..., k\}$, where $r$ is a radius and $k$ is the maximal number of samples in ball query algorithm.
It is a well-established approach that helps utilize the geometry of input points in the Euclidean space and normalize the scale of receptive field by convolution,
suggested by many previous works with the locality assumption~\cite{qi2017pointnet++,thomas2019kpconv,chen2021equivariant,zhu2023e2pn}.

We encode the double coset parameters $[\bar{\beta}_i, \bar{r}_i, \bar{z}_i]$ of each neighbor point $x_i = (\mathbf{x}_i, \mathbf{n}_i)$ referring to the local coordinate system of ball query centroid $x = (\mathbf{x}, \mathbf{n})$.
These parameters correspond to $[{\beta}_g, {r}_g, {z}_g]$ of $s(x)^{-1}x_i$ from \Cref{eq:dcp,eq:cse2conv_op} and are acquired as follows:
\begin{equation}\label{eq:se3conv_dc_encode}
\begin{gathered}
    \bar{x}_i = [\bar{\beta}_i, \bar{r}_i, \bar{z}_i],\;\;\text{s.t.}\;\bar{\beta}_i = \arccos(\mathbf{n}^{\top}\!\cdot\mathbf{n}_i),\\
    \bar{z}_i = (\mathbf{n^{\top}}\cdot\dfrac{\Delta_i}{\|\Delta_i\|})\!\cdot\|\Delta_i\|/r,\\
    \bar{r}_i = \sqrt{1-(\mathbf{n^{\top}}\cdot\dfrac{\Delta_i}{\|\Delta_i\|})^2}\!\cdot\|\Delta_i\|/r,
\end{gathered}
\end{equation}
where $\Delta_i = \mathbf{x}_i - \mathbf{x}$ and $r$ is a ball query raidus.
The geometric interpretation of \Cref{eq:se3conv_dc_encode} is in \Cref{fig:grouping_to_dc},
where the local coordinate system by $x$ is aligned to a zero-centered Z-axis and SE(3) invariant information of $x_i$ is encoded to $\bar{x}_i$.
Thus, the operation only processes the double coset parameters in the scale-normalized region and enables each ECKConv layer to learn SE(3)-equivariant local geometries.

\subsubsection{Explicit Kernel using Coordinate-based Networks}
An explicit kernel function $\kappa(\cdot)$ consists of the combination of learnable linear maps as \Cref{eq:cse2conv_filter},
inspired by various convolution methods that compute the integration over irregular samples~\cite{thomas2019kpconv,chen2021equivariant,zhu2023e2pn,thomas2024kpconvx}.
\begin{equation}\label{eq:cse2conv_filter}
\begin{gathered}
    \kappa(s(x)^{-1}x_i\,;\mathbf{\theta},\big[\mathbf{W}_{j}\big]^{A}_{j=1}) = \sum^{A}_{j=1} \omega_j(\bar{x}_i\,;\mathbf{\theta})\,\mathbf{W}_j,\\
    \text{s.t.}\;s(x)^{-1}x_i\in\mathbb{R}^3 \times S^2,\;\bar{x}_i\in\text{SO(2)} \backslash \text{SE(3)} / \text{SO(2)},
\end{gathered}
\end{equation}
where input $s(x)^{-1}x_i$ from the quotient space is mapped to the double coset $\bar{x}_{i}$ by \Cref{eq:se3conv_dc_encode}, $\mathbf{W}_j \in \mathbb{R}^{C_\text{out} \times C_\text{in}}$, and $A$ is the number of anchor bases.
Previous studies determine the values of gate functions $\omega_j(\cdot)$ by the distance between input coordinates and anchor points in the 3D space.
However, they do not fit in our kernel function,
as it is obscure to measure the distance in the double coset space.

Instead, we incorporate the coordinate-based networks to surrogate the metric over the double coset space inspired by CSEConv~\cite{kim2024continuous}.
Neural networks $F_{\theta}(\cdot)$ map coordinates $\bar{x}_i$ into unbounded coefficients $\omega_j$ for each anchor basis as:
\begin{equation}\label{eq:cse2conv_rff}
\begin{gathered}
    \omega(\bar{x}_i\,;\mathbf{\theta}) = [\omega_j]^{A}_{j=1} = F_{\theta}\big(\text{Gau}(\bar{x}_i)\big) \in \mathbb{R}^A,\\
    \text{s.t.}\;\text{Gau}([\bar{\beta}_i,\bar{r}_i,\bar{z}_i])=[\Psi(\nicefrac{\bar{\beta}_i}{\pi}),\Psi(\bar{r}_i),\Psi(\nicefrac{(\bar{z}_i+1)}{2})],\\
    \Psi(x)=[\psi(x,0),\psi(x,\nicefrac{1}{d}),\cdots,\psi(x,\nicefrac{(d-1)}{d})],
\end{gathered}
\end{equation}
where $\psi(x, y)=\exp\big(-\nicefrac{(x-y)^2}{2\sigma^2}\big)$ is a Gaussian kernel with $\sigma=0.05$ and $\text{Gau}(\cdot)\in\mathbb{R}^{3d}$ is referred to as Gaussian embedding~\citep{zheng2021rethinking,zheng2022trading}.
By mapping $[\bar{\beta}_i, \bar{r}_i, \bar{z}_i]$ into a range $[0, 1]$, \citet{zheng2021rethinking,zheng2022trading} showed that Gaussian embedding provides the balance between memorization and generalization of coordinate learning with its bounded rank.
Then, combining \Cref{eq:cse2conv_op,eq:cse2conv_filter} is written as:
\begin{equation}\label{eq:cse2conv_original}
    (f \ast \kappa)(x)=\sum_{x_i \in \mathcal{N}(x)}\Big(\sum^{A}_{j=1} \omega_j(\bar{x}_i\,;\mathbf{\theta})\,\mathbf{W}_j\Big)f(x_i).
\end{equation}
This might be interpretable as an implicit kernel since $\omega(\cdot)$ is a coordinate-based networks.
However, we separate $\omega(\cdot)$ and basis maps $\mathbf{W}_j$ as an explicit kernel, and reorder \Cref{eq:cse2conv_original} as follows inspired by \citet{wu2019pointconv}:
\begin{equation}\label{eq:cse2conv_trick}
    (f \ast \kappa)(x)=\sum^{A}_{j=1}\;\mathbf{W}_{j}\!\!\!\sum_{x_i \in \mathcal{N}(x)}\!\omega_{j}(\bar{x}_i;\,\theta)f(x_i),
\end{equation}
which is the final formulation of ECKConv.

\subsection{Scalability Improvement in ECKConv}
Now we show the significance of this modification with respect to the computational benefit of \Cref{eq:cse2conv_trick}, although the operations in \Cref{eq:cse2conv_original,eq:cse2conv_trick} are equivalent~\cite{wu2019pointconv}, through the following proposition.
\begin{proposition}\label{propo}
    Let $C_\text{in}$ be the dimension of input feature, $C_\text{out}$ be the dimension of output feature, $K$ be the cardinality of neighbors, and $A$ be the cardinality of anchor bases.
    Then the cost from the derivative by $\theta$, which is the parameter of $\omega$, reduces from $\mathcal{O}(A\,K\,C_\text{in}\,C_\text{out})$ to $\mathcal{O}(A(K\,C_\text{in} + C_\text{in}\,C_\text{out}))$.
\end{proposition}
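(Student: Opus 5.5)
We compare the cost of backpropagating to $\theta$ under the two orderings \Cref{eq:cse2conv_original,eq:cse2conv_trick}, at a single centroid $x$. By the chain rule, $\partial\mathcal{L}/\partial\theta = \sum_{i,j} (\partial\mathcal{L}/\partial\omega_j(\bar{x}_i;\theta))\,(\partial\omega_j(\bar{x}_i;\theta)/\partial\theta)$. The second factor is the Jacobian of the coordinate network $F_\theta$. It is identical in both formulations and independent of $C_\text{in}$ and $C_\text{out}$, so we treat it as a common term. The whole proposition then reduces to counting the cost of the $AK$ scalars $\delta_{ij}:=\partial\mathcal{L}/\partial\omega_j(\bar{x}_i)$, given the upstream gradient $g:=\partial\mathcal{L}/\partial(f\ast\kappa)(x)\in\mathbb{R}^{C_\text{out}}$.

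\textbf{Step 1 (implicit ordering).} In \Cref{eq:cse2conv_original}, the kernel $\kappa_i=\sum_j\omega_j(\bar{x}_i)\mathbf{W}_j$ is materialized for every neighbor $i$ as a $C_\text{out}\times C_\text{in}$ matrix. Autograd first forms $\partial\mathcal{L}/\partial\kappa_i = g\,f(x_i)^\top$, costing $\mathcal{O}(KC_\text{in}C_\text{out})$. It then contracts with each basis, $\delta_{ij}=\langle g f(x_i)^\top,\mathbf{W}_j\rangle_F$, which costs $\mathcal{O}(C_\text{in}C_\text{out})$ for each of the $AK$ pairs. The total is $\mathcal{O}(AKC_\text{in}C_\text{out})$, and the forward construction of all $\kappa_i$ also stores and costs this much.

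\textbf{Step 2 (explicit ordering).} In \Cref{eq:cse2conv_trick}, define the intermediate $u_j=\sum_i\omega_j(\bar{x}_i)f(x_i)\in\mathbb{R}^{C_\text{in}}$, so that the output is $\sum_j\mathbf{W}_ju_j$. Backward proceeds in two stages. First, $\partial\mathcal{L}/\partial u_j=\mathbf{W}_j^\top g$, costing $\mathcal{O}(C_\text{in}C_\text{out})$ per $j$ and hence $\mathcal{O}(AC_\text{in}C_\text{out})$ in total. Second, $\delta_{ij}=(\mathbf{W}_j^\top g)^\top f(x_i)$, costing $\mathcal{O}(C_\text{in})$ per pair and hence $\mathcal{O}(AKC_\text{in})$ in total. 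Summing the two stages gives $\mathcal{O}(A(KC_\text{in}+C_\text{in}C_\text{out}))$. The forward pass has the same cost, $\mathcal{O}(AKC_\text{in})$ for the $u_j$ plus $\mathcal{O}(AC_\text{in}C_\text{out})$ for applying the $\mathbf{W}_j$. The gain comes from the fact that the factor $\mathbf{W}_j^\top g$ is shared across all $K$ neighbors.

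\textbf{Main obstacle.} The delicate point is justifying the Step 1 count as the cost the implicit form actually incurs. One could object that $\delta_{ij}$ might be regrouped there as well. The argument is that \Cref{eq:cse2conv_original} is an implementation in which $\kappa_i$ is an explicit per-neighbor tensor, so the gradient must flow through it. Regrouping is exactly the reordering of \Cref{eq:cse2conv_trick}. We will state this convention explicitly and note that the two formulations are mathematically equal~\cite{wu2019pointconv}. We will also note that the $\theta$-Jacobian term $\mathcal{O}(AK|\theta|)$ is common to both and is omitted from the comparison.
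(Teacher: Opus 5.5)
Your argument is correct and reaches the paper's bound, but your accounting is more careful than the paper's.

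The paper writes one Jacobian $\nabla_{\omega_j}Y\in\mathbb{R}^{C_\text{out}}$ per anchor. It compares $\sum_i\mathbf{W}_jf(x_i)$ ($K$ matrix--vector products) against $\mathbf{W}_j\sum_i f(x_i)$ (one sum and one product), then multiplies by $A$ through $\nabla_\theta Y=[\nabla_{\omega_j}Y]_{j=1}^{A}\nabla_\theta\omega$. You instead keep the $AK$ scalars $\omega_j(\bar{x}_i)$ separate and count reverse-mode products against the upstream gradient $g$. This places the saving in the reuse of $\mathbf{W}_j^\top g$ across all $K$ neighbors.

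What your route buys is rigor. Because $\nabla_\theta\omega_j(\bar{x}_i)$ varies with $i$, the pooled expression $\mathbf{W}_j\sum_i f(x_i)$ is literally the derivative with respect to a neighbor-independent $\omega_j$. Meanwhile, the per-pair Jacobians $\partial Y/\partial\omega_j(\bar{x}_i)=\mathbf{W}_jf(x_i)$ are the same in both forms, and listing them costs $\mathcal{O}(AKC_\text{in}C_\text{out})$ in either. The reduction only appears once they are contracted with a vector, as in your Step 2. Your explicit convention that the claim concerns the computational graph, not the (identical) function, also states an assumption the paper leaves implicit. The paper's route buys brevity and mirrors the PointConv regrouping of the forward pass directly.

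One small fix to an aside in Step 1: materializing all $\kappa_i$ costs $\mathcal{O}(AKC_\text{in}C_\text{out})$ time but only $\mathcal{O}(KC_\text{in}C_\text{out})$ storage. The storage is larger only if the individual products $\omega_j(\bar{x}_i)\mathbf{W}_j$ are kept before summing over $j$. This does not affect the argument.
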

\begin{proof}
    Let us denote $Y=(f \ast \kappa)(x) \in \mathbb{R}^{C_\text{out}}$.
    Since the modification does not influence $\omega(x)$, the computation cost of $\nabla_{\theta}\omega$ is constant.
    Then one can distinguish the formulas of $\nabla_{\omega_{j}}Y$ according to the operations as follows: 
    \begin{equation}
        \nabla_{\omega_{j}}Y =
        \begin{cases}
            \sum^{K}_{i=1}\mathbf{W}_jf(x_i) & \text{from Equation~(\ref{eq:cse2conv_original}),}\\
            & \\
            \mathbf{W}_j\sum^{K}_{i=1}f(x_i) & \text{from Equation~(\ref{eq:cse2conv_trick}).}
        \end{cases}
    \end{equation}
    The cost by the summation of products between $\mathbf{W}_j$ and $f(x_i)$ is $\mathcal{O}(K\,C_\text{in}\,C_\text{out})$,
    and the cost by the product between $\mathbf{W}_j$ and the summation of $f(x_i)$s is $\mathcal{O}(K\,C_\text{in} + C_\text{in}\,C_\text{out})$.
    Since $\nabla_{\theta}Y=\big[\nabla_{\omega_j}Y\big]^{A}_{j=1}\nabla_{\theta}\omega$,
    the cost by $\nabla_{\theta}Y$ alters from $\mathcal{O}(A\,K\,C_\text{in}\,C_\text{out})$ to $\mathcal{O}(A(K\,C_\text{in} + C_\text{in}\,C_\text{out}))$.
\end{proof}
\noindent Proposition~\ref{propo} states that \Cref{eq:cse2conv_trick} is equivalent to \Cref{eq:cse2conv_original} with reduced memory cost during the backpropagation.
Considering the equivalence of \Cref{eq:cse2conv_original} to the implicit kernel, such as CSEConv~\cite{kim2024continuous}, it necessitates the explicit kernel for the scalability, enabling more parameters on each layer and deeper network architecture.

\subsection{Architecture of ECKConv Block and Residuals}
\begin{figure}[t]
    \centering
    \includegraphics[width=\columnwidth]{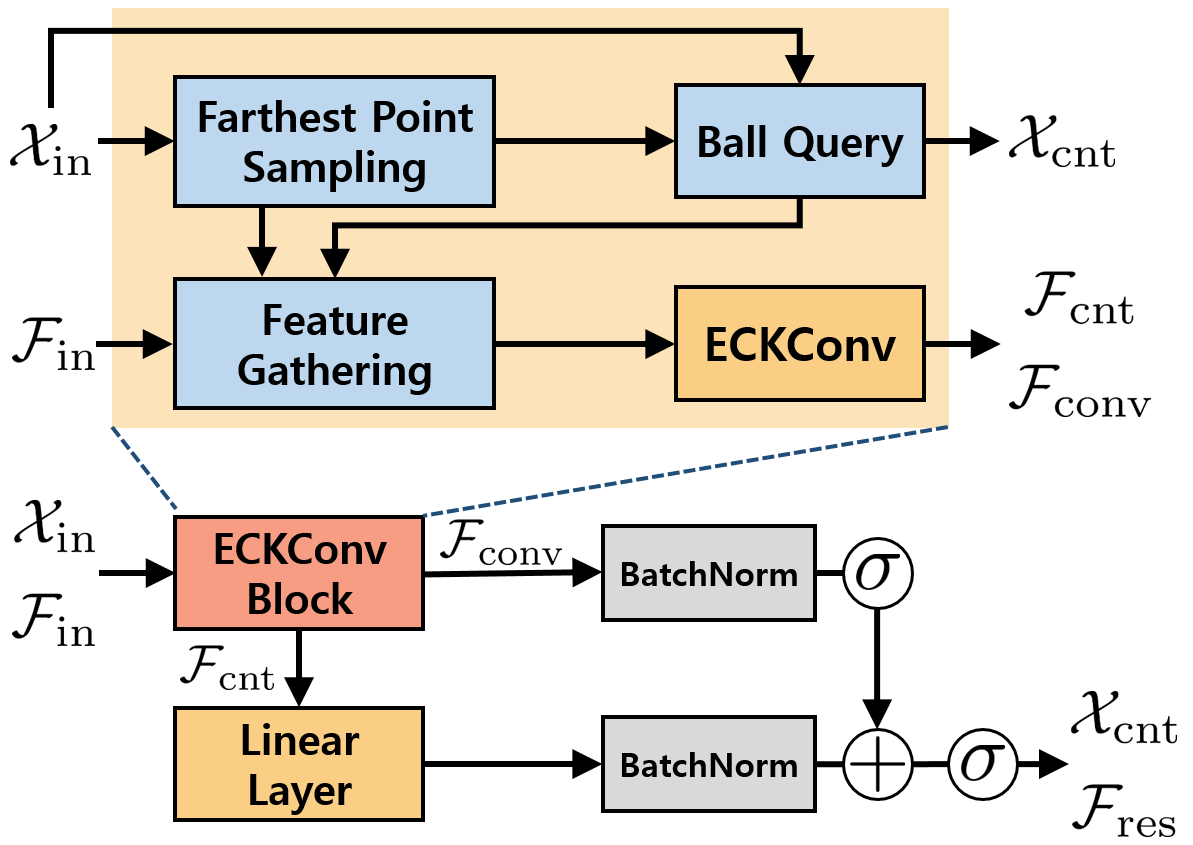}
    \caption{
    Architectures of ECKConv and its residual connection. We apply batch normalizations and activations ($\sigma$) in the order shown in the figure. As we only visualize abstract flow of the ECKConv block and its residual block, please refer to \Cref{sec:supp_layer_detail} in the supplementary material for more details.
    }
    \label{fig:layer_arch}
\end{figure}

The preceding components coalesce into a single block of ECKConv as \Cref{fig:layer_arch}.
We denote the input with $N$ points as $(\mathcal{X}_\text{in},\mathcal{F}_\text{in})$ where $\mathcal{X}_\text{in}=\{x^{n}|x^{n}=(\mathbf{x}^{n}, \mathbf{n}^{n})\}^{N}_{n=1}$, $\mathcal{F}_\text{in}=\{f^{n}|f^{n}=f(x^{n})\}^{N}_{n=1}$, and $\mathbf{x}^{n}$, $\mathbf{n}^{n}$, and $f^{n}$ are the coordinate, normal vector, and feature of $n$th point.
A farthest point sampling~\cite{qi2017pointnet++} chooses centroid points $\mathcal{X}_\text{cnt}$ and the block gathers features $\mathcal{F}_\text{cnt}$ that correspond to these centroids.
After a ball query finds neighbor points $\mathcal{N}(x^{n})$ for each centroid $x^{n}$ and their corresponding features are also gathered,
the ECKConv operates the convolution in \Cref{eq:cse2conv_trick} and outputs $\mathcal{F}_\text{conv}$.
The pair of $(\mathcal{X}_\text{cnt},\mathcal{F}_\text{conv})$ becomes an output point cloud of the ECKConv block.

Furthermore, we implement the residual connection block \cite{he2016deep} of ECKConv for the enhanced model capability as \Cref{fig:layer_arch}, inspired by previous point convolution methods~\cite{thomas2019kpconv,zhu2023e2pn}.
The features of centroids $\mathcal{F}_\text{cnt}$ are propagated by a linear layer, summed with $\mathcal{F}_\text{conv}$, and become new output features $\mathcal{F}_\text{res}$ paired with $\mathcal{X}_\text{cnt}$.

\section{Experiments}
We conducted classification and pose registration task in the ModelNet40~\cite{wu20153d} to verify the memory scalability, enhanced model capability, and rigorous SE(3) equivariance of ECKConv.
Then it is validated on more complex tasks, \ie, object part and indoor semantic segmentation, to verify its understanding of local geometries with equivariance and scalability to the real world.
Please refer to the supplementary material for detailed training configurations.

\noindent\textbf{Classification:}
It is the most standard task where a model classifies an input point cloud to one of 40 object categories.
As in previous studies~\cite{zhu2023e2pn,kim2024continuous}, we trained and evaluated models with or without applying SO(3) rotations, meaning 4 cases were reported for each model.

\noindent\textbf{Pose Registration:}
In this task, a model predicts the relative SE(3) transformation between two identical objects with different poses.
We measured the relative angle error and translation RMSE between the estimated and ground truth transformations for assessment.

\noindent\textbf{Object Part Segmentation:}
ShapeNet~\cite{chang2015shapenet} contains objects over 16 categories,
and point samples of the object are annotated with 50 part segmentation labels defined disjointly by their object category.
A model outputs part segmentation labels classifiable into 50 categories per every sample point, given the whole point cloud and its object label.
The segmentation performances were measured with mIoUs averaged over instances or part classes.

\noindent\textbf{Indoor Semantic Segmentation:}
We conducted a semantic segmentation using the S3DIS~\cite{armeni20163d} that contains indoor scenes sampled from real-world environments.
This benchmark is challenging due to its enormous scale, where point clouds are densely sampled from 271 rooms across 6 different areas covering over 6000$\;\!\!m^2$ and labeled with one of 13 semantic classes.
We used Area 5 as the test set and trained with the rest of Areas following previous works~\cite{thomas2019kpconv,thomas2024kpconvx,su2025ri}.

\subsection{Classification and Scalability Evaluation}
\begin{table}[t]
    \centering
    \caption{Classification accuracy on the ModelNet40 dataset. Since PRLC~\cite{mondal2023equivariant} requires the prior that input has an aligned orientation, it is impossible to apply SO(3) augmentation during its training. \textdagger~denotes the results reported from previous works~\cite{zhu2023e2pn,kim2024continuous}.}\label{tab:modelnet_cls}
    \resizebox{\columnwidth}{!}{{\large
    \begin{tabular}{l|cccc}
    \toprule[1.0pt]
    \multirow{2}{*}{Models} & \multicolumn{4}{c}{Classification Accuracy (\%)}\\
     & $I/I$ & $I/\text{SO(3)}$ & $\text{SO(3)}/I$ & $\text{SO(3)}/\text{SO(3)}$ \\
    \hline
    \hline
    &&&&\\[-0.8em]
    PointNet++~\cite{qi2017pointnet++}\textsuperscript{\textdagger}& 89.13 & 9.58  & 81.16 & 80.29 \\
    PointConv~\cite{wu2019pointconv}                               & 92.31 & 5.29  & 88.14 & 87.74 \\
    KPConv~\cite{thomas2019kpconv}\textsuperscript{\textdagger}    & 91.25 & 14.56 & 84.84 & 83.39 \\
    DGCNN~\cite{wang2019dynamic}                                   & 91.41 & 14.34 & 88.24 & 87.97 \\
    PCT~\cite{guo2021pct}\textsuperscript{\textdagger}             & 91.25 & 15.84 & 86.39 & 84.20 \\
    PTv3~\cite{wu2024point}                                        & \textbf{92.54} & 13.65 & 86.91 & 88.17 \\
    &&&&\\[-1.0em]
    \hline
    &&&&\\[-1.0em]
    VN~\cite{deng2021vector}                                       & 90.52 & 90.52 & 90.40 & 90.28 \\
    FA~\cite{puny2022frame}\textsuperscript{\textdagger}           & 82.25 & 82.25 & 82.01 & 82.01 \\
    LC~\cite{kaba2023equivariance}                                 & 88.33 & 88.87 & 88.09 & 88.12 \\
    PRLC~\cite{mondal2023equivariant}                              & 88.49 & 88.45 & -     & - \\
    &&&&\\[-1.0em]
    \hline
    &&&&\\[-1.0em]
    TFN~\cite{thomas2018tensor}\textsuperscript{\textdagger}       & 62.28 & 62.28 & 62.64 & 62.64 \\
    SE(3)-T~\cite{fuchs2020se}\textsuperscript{\textdagger}        & 71.60 & 71.60 & 73.01 & 73.01 \\
    EPN~\cite{chen2021equivariant}\textsuperscript{\textdagger}    & 91.45 & 31.08 & 86.60 & 86.60 \\
    E2PN~\cite{zhu2023e2pn}\textsuperscript{\textdagger}           & 91.58 & 44.47 & 89.47 & 88.58 \\
    CSEConv~\cite{kim2024continuous}\textsuperscript{\textdagger}  & 83.79 & 83.75 & 83.83 & 83.75 \\
    ECKConv-mini                                                   & 87.40 & 87.40 & 87.32 & 87.36 \\

    ECKConv                                                        & 90.52 & 90.52 & 90.19 & 90.19 \\
    ECKConv-Normal                                                 & 91.37 & \textbf{91.37} & \textbf{90.92} & \textbf{90.92} \\
    \bottomrule[1.0pt]
    \end{tabular}}}
\end{table}

\begin{table}[t]
    \centering
    \caption{Cost table of the equivariant classification baselines.}\label{tab:modelnet_cls_cost}
    \resizebox{\columnwidth}{!}{{\large
    \begin{tabular}{l|cc|cc|c}
    \toprule[1.0pt]
    \multirow{2}{*}{\begin{tabular}[c]{@{}l@{}}Models\\(Batch Size = 12)\end{tabular}} & \multicolumn{2}{c|}{Memory (GB) $\downarrow$} & \multicolumn{2}{c|}{$\#\text{Batch}/\text{sec}$ $\uparrow$} & \multirow{2}{*}{\#params}\\
    & Train & Eval & Train & Eval & \\
    \hline
    &&&&&\\[-1.0em]
    VN~\cite{deng2021vector} & 6.10  & \underline{1.98} & 4.34 & 8.70 & 2.9M\\
    EPN~\cite{chen2021equivariant} & 13.40 & 6.34 & 2.11 & 3.36 & 3.2M\\
    E2PN~\cite{zhu2023e2pn} & \textbf{3.94}  & 2.41 & \textbf{8.94} & \textbf{18.97} & 2.7M\\
    ECKConv & \underline{4.89}  & \textbf{1.28} & \underline{7.74} & \underline{15.09} & 27.7M\\ 
    &&&&&\\[-1.1em]
    \hline
    \hline
    &&&&&\\[-1.0em]
    CSEConv~\cite{kim2024continuous} & 2.95  & 0.44 & \textbf{\textit{30.71}} & \textbf{\textit{59.77}} & 2.1M\\
    ECKConv-mini & \textbf{\textit{0.72}} & \textbf{\textit{0.28}} & 23.13 & 42.74 & 1.9M\\ 
    \bottomrule[1.0pt]
    \end{tabular}}}
\end{table}

We prepared two variations of our classification models: they either used the true normal vector or \Cref{alg:supp_coset_gen} as the $S^2$ input (\textbf{ECKConv-Normal} and \textbf{ECKConv}).
Baselines were categorized into three types: the non-equivariant methods~\cite{qi2017pointnet++,wu2019pointconv,thomas2019kpconv,wang2019dynamic,guo2021pct,wu2024point}, the model-agnostic methods~\cite{deng2021vector,puny2022frame,kaba2023equivariance,mondal2023equivariant} based on DGCNN~\cite{wang2019dynamic},
and the group convolution methods~\cite{thomas2018tensor,fuchs2020se,chen2021equivariant,zhu2023e2pn,kim2024continuous} including both discrete and steerable methods.
Furthermore, we experimented \textbf{ECKConv-mini} which has identical model setups, \eg, the number of layers, layer-wise input/output dimensions, and the size of neighbors, to CSEConv~\cite{kim2024continuous} for the fair scalability comparison and substantiation of Proposition~\ref{propo}.

\Cref{tab:modelnet_cls} shows that ECKConv-Normal reached the best accuracy over 91\% when either the training or test set were rotated.
Even without the normal vector input, ECKConv reached comparable accuracy to Vector Neurons~\cite{deng2021vector} which is only equivariant to SO(3), and ECKConv-mini achieved nearly 4\%p higher accuracy than CSEConv.
These results validate the performance benefit of ECKConv against relying on augmentation and previous equivariant methods.

We measured the efficiency of equivariant baselines and ECKConv variants in \Cref{tab:modelnet_cls_cost}.
While ECKConv maintained an efficiency comparable to E2PN,
the comparison between CSEConv and ECKConv-mini revealed the benefit of explicit kernel in terms of memory as stated in Proposition~\ref{propo}.
In conclusion, ECKConv demonstrates its scalability regarding both model capacity and memory efficiency.

\subsection{Pose Registration}
\begin{table}[t]
    \centering
    \caption{Pose registration results on the ModelNet40. We reproduced the methods by \citet{chen2021equivariant} and \citet{zhu2023e2pn} for KPConv, EPN, and E2PN, which do not estimate the translation difference. Models were evaluated 10 times with different seeds.}\label{tab:modelnet_reg}
    \resizebox{\columnwidth}{!}{{\Large
    \begin{tabular}{l|c|c|c|c}
    \toprule[1.0pt]
    Models & Mean ($^\circ$) & Median ($^\circ$) & Max ($^\circ$) & tRMSE\\
    \hline
    \hline
    &&&&\\[-0.8em]
    KPConv~\cite{thomas2019kpconv} & $128.57 \pm 0.97$ & $123.22 \pm 0.52$ & $180.00 \pm 0.00$  & -\\
    DGCNN\,+\,DCP-v1~\cite{wang2019deep}     & $121.29 \pm 0.90$ & $137.19 \pm 1.95$ & $179.60 \pm 0.00$  & 0.081\\
    DGCNN\,+\,DCP-v2~\cite{wang2019deep}     & $120.07 \pm 1.28$ & $132.54 \pm 1.94$ & $179.60 \pm 0.00$  & 0.072\\
    \qquad\,w/ SO(3) Aug & $121.61 \pm 1.58$ & $137.62 \pm 1.85$ & $179.60 \pm 0.00$  & 0.075\\
    &&&&\\[-1.0em]
    \hline
    &&&&\\[-1.0em]
    EPN~\cite{chen2021equivariant} & $1.62   \pm 0.02$ & $2.40   \pm 0.08$ & $110.14 \pm 45.53$ & -\\
    E2PN~\cite{zhu2023e2pn}        & $11.70  \pm 0.25$ & $13.71  \pm 0.19$ & $163.62 \pm 15.70$ & -\\
    CSEConv~\cite{kim2024continuous}\,+\,DCP-v2 & $9.34  \pm 0.16$ & $2.93  \pm 0.07$ & $178.95 \pm 1.06$ & 0.17\\
    ECKConv\,+\,DCP-v2 & $\mathbf{0.63 \pm 0.00}$ & $\mathbf{0.53 \pm 0.00}$ & $\mathbf{8.57 \pm 2.40}$ & \textbf{0.010}\\
    \bottomrule[1.0pt]
    \end{tabular}}}
\end{table}
Our pose registration model consisted of ECKConv local feature extractor and DCP method~\cite{wang2019deep} to compute the relative pose between objects.
Models experimented by \citet{wang2019deep} were compared as baselines since they utilize DGCNN~\cite{wang2019dynamic} as a feature extractor.
We also included models from \citet{zhu2023e2pn} and the CSEConv model sharing the identical architecture to ours as the equivariant baselines.
Each metric represents the mean, median, maximum of error angles by rotations and RMSE of translation error.
The training was conducted with randomly sampled poses from $SE(3)$ and the entire object categories.

Table~\ref{tab:modelnet_reg} demonstrates the definite performance gap between our model and others in every metric.
Baselines with DGCNN backbones could not handle the rotation difference sampled from the complete SO(3) space.
We also verified that applying SO(3) augmentation on the source object during the training of DGCNN\,+\,DCP-v2 was unavailing either.
Other baselines, only structurally equivariant to discrete group or SO(3), were degraded by high maximum angle errors.
ECKConv with the closest point approach fit best to the pose registration task thanks to its embedded and thorough symmetry to any SE(3) actions.

\subsection{Object Part Segmentation}

Our part segmentation model adopted U-Net~\cite{ronneberger2015u} inspired approach by \citet{qi2017pointnet++} and adopted the feature interpolation module to up-sample the features of K-closest points.
Please refer to \Cref{sec:supp_pfp} in the supplementary material for the interpolation details.
We prepared Pointent++~\cite{qi2017pointnet++}, original DGCNN, model-agnostic methods based on DGCNN, and CSEConv baseline implemented with similar setups to ECKConv as baselines.

\Cref{tab:shapenetpart} shows that ECKConv achieved the instance averaged mIoUs over 83\%, the best among the baselines when the SO(3) is applied during the evaluation.
The part class averaged mIoUs (mcIoUs) of ECKConv also reached comparable scores to the augmented and model-agnostic baselines.
A qualitative analysis in \Cref{fig:part_seg} revealed that CSEConv failed to discern local geometries even in the simple objects.
Contrarily, ECKConv distinguished fine local features, \eg, the saddle and wheel of the bike, more accurately than other baselines.
Unlike the previous experiments where learning the global shape was enough to solve the task,
the performance gap in the segmentation task validates the outstanding capability of learning local geometries by ECKConv among the intertwiner convolution framework.

\begin{table}[t]
    \centering
    \caption{Part segmentation on the ShapeNet. mIoU denotes the IoU per object averaged among every instance, and mcIoU denotes that IoUs are averaged per part class before the total mean.}\label{tab:shapenetpart}
    \resizebox{\columnwidth}{!}{{\large
    \begin{tabular}{l|cc|cc|cc}
    \toprule[1.0pt]
    \multirow{2}{*}{Models} & \multicolumn{2}{c|}{$I/I$} & \multicolumn{2}{c|}{$I/\text{SO(3)}$} & \multicolumn{2}{c}{$\text{SO(3)}/\text{SO(3)}$}\\
    & mIoU & mcIoU & mIoU & mcIoU & mIoU & mcIoU \\
    \hline
    \hline
    &&&&&&\\[-0.9em]
    PointNet++~\cite{qi2017pointnet++} & \textbf{85.56} & 82.24 & 28.97 & 32.95 & 82.10 & 78.02 \\
    DGCNN~\cite{wang2019dynamic}       & 85.13 & \textbf{85.41} & 31.48 & 32.19 & 77.60 & 78.24 \\
    &&&&&&\\[-1.1em]
    \hline
    &&&&&&\\[-1.0em]
    VN~\cite{deng2021vector}           & 81.74 & 82.06 & 81.73 & \textbf{82.05} & 81.76 & \textbf{82.10} \\
    LC~\cite{kaba2023equivariance}     & 81.04 & 81.43 & 81.07 & 81.45 & 81.05 & 81.53 \\
    PRLC~\cite{mondal2023equivariant}  & 79.38 & 79.90 & 79.39 & 79.91 & -     & - \\
    CSEConv~\cite{kim2024continuous}  & 35.80 & 32.23 & 35.80 & 32.23 & 35.51 & 33.20 \\
    ECKConv   & 83.78 & 81.19 & \textbf{83.78} & 81.19 & \textbf{83.68} & 80.99 \\
    \bottomrule[1.0pt]
    \end{tabular}}}
\end{table}

\begin{figure}[t]
    \centering
    \includegraphics[width=0.92\columnwidth, trim={1.0cm, 0.3cm, 0.3cm, 0.3cm}, clip]{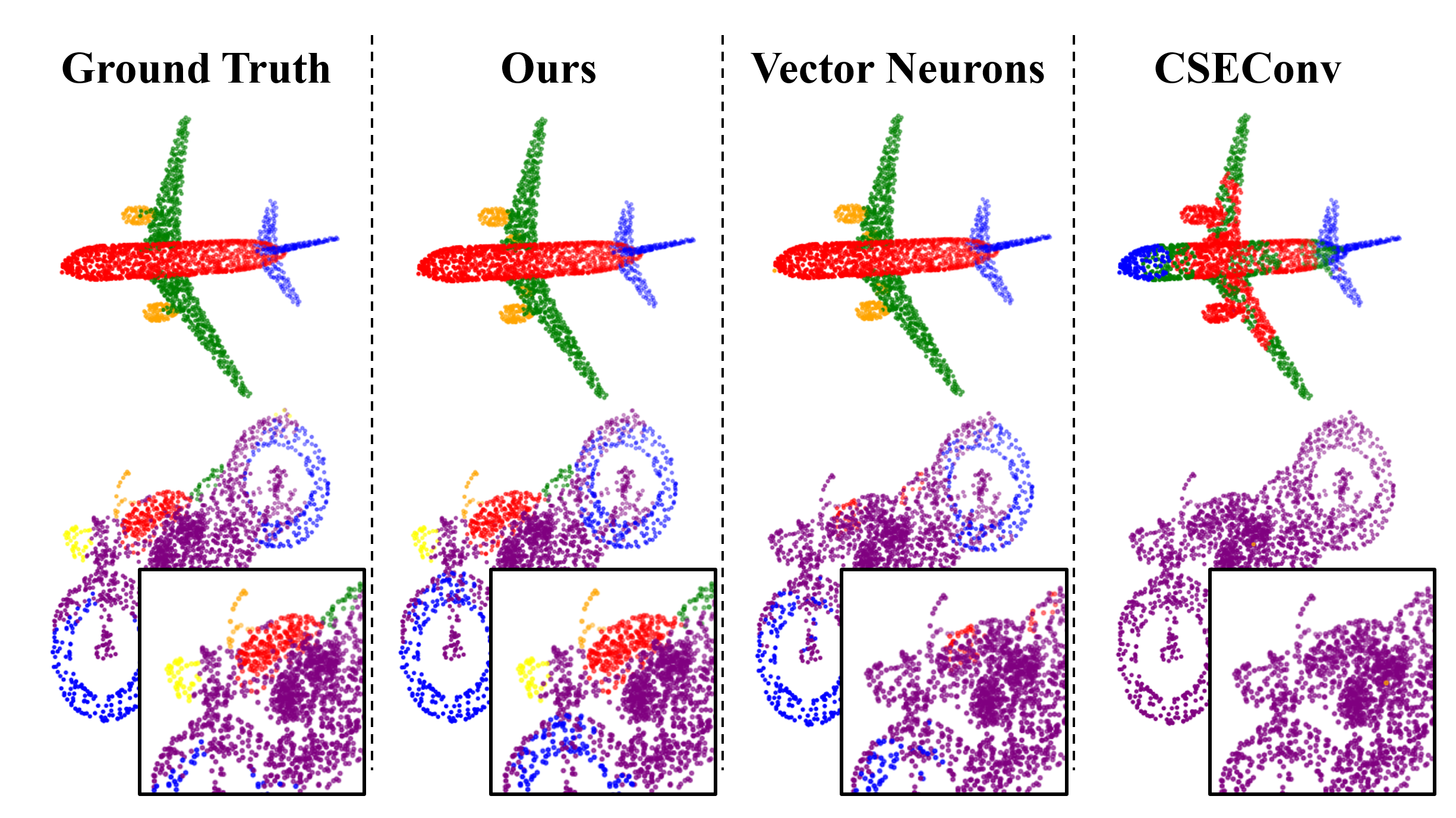}
    \caption{Visualization of the part segmentation in the ShapeNet by ECKConv (Ours), Vector Neurons~\cite{deng2021vector}, and CSEConv~\cite{kim2024continuous}.}
    \label{fig:part_seg}
\end{figure}

\begin{figure*}[t!]
    \centering
    \includegraphics[width=0.9\textwidth]{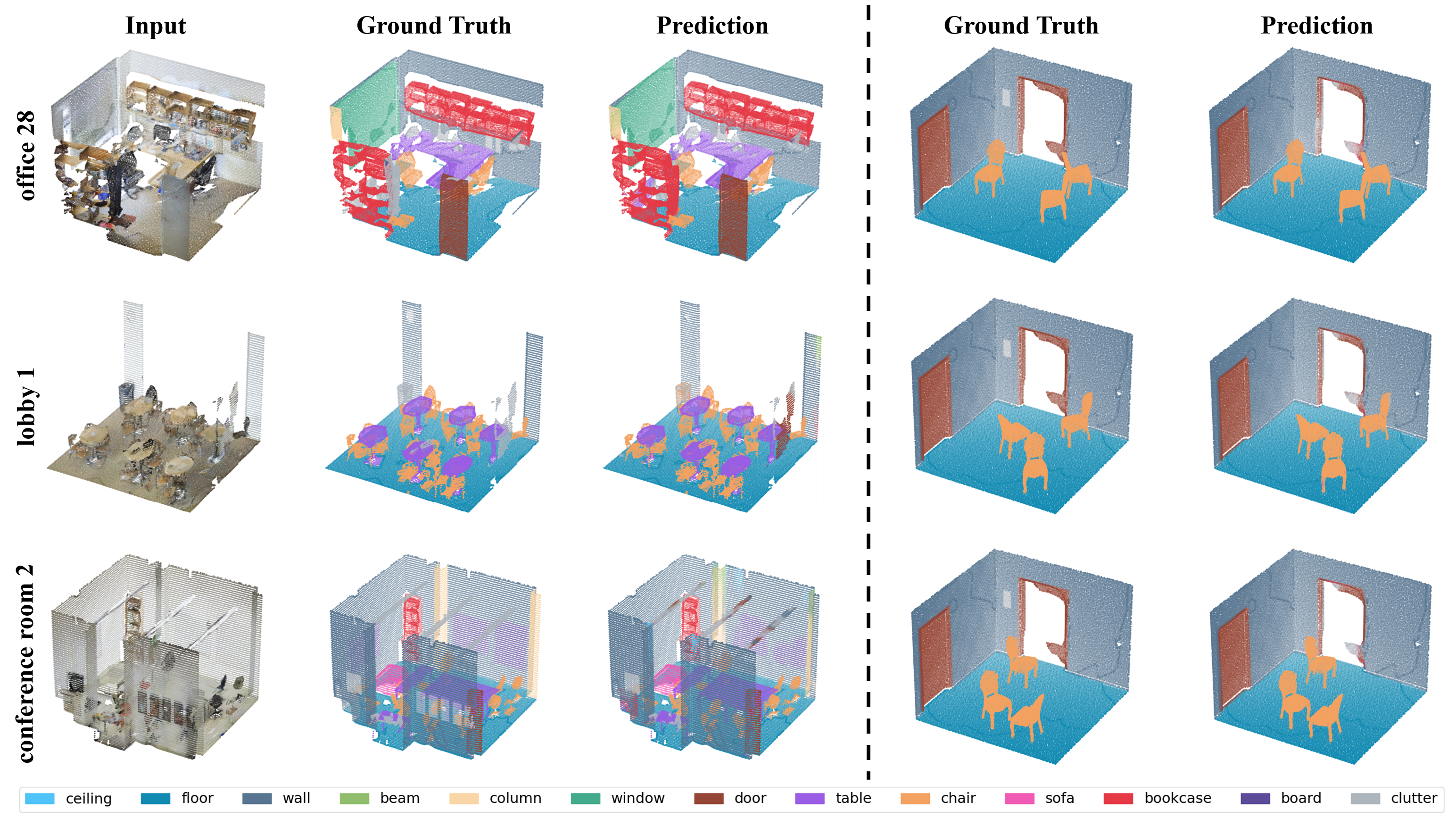}
    \caption{Semantic segmentation results by ECKConv on the Area 5 of S3DIS~\cite{armeni20163d}.
    The ceiling and wall points that obscure the view point are omitted in the figures for the better visualization.
    (Left) The input, ground truth label, and prediction results of \textit{office 28}, \textit{lobby 1}, and \textit{conference room 2} in the Area 5.
    Various indoor objects, \eg, bookcase, table, chair, and sofa, are successfully segmented from the scenes by our method.
    (Right) The simulation by randomly placing chair objects from ModelNet40~\cite{wu20153d} in the \textit{hallway 8} of Area 5. Even though the chair objects are out-of-distribution from the S3DIS, our method segments every object from the simulated scenes.}
    \label{fig:indoor_seg}
\end{figure*}
\subsection{Indoor Semantic Segmentation}\label{sec:indoor_semseg}
Our indoor segmentation model cropped 4$\;\!\!m^2$ square sub-regions and sampled the points from a single scene during its training due to the large scale of S3DIS, following previous researches~\cite{dong2022autoencoders,su2025ri}.
Similar to the part segmentation model, its U-Net architecture consisted of 30 layers of ECKConv and received point clouds with normal vectors but without colors.
It was evaluated to cover the whole points of each aligned scene in the Area 5; we divided the scene to sub-regions with one meter strides, and every point in each region was split to uniformly sampled point clouds.
We also voted 3 times to decide the final prediction as previous works~\cite{thomas2019kpconv,dong2022autoencoders,thomas2024kpconvx,su2025ri} and measured the segmentation performance with overall accuracy (OA), mean accuracy (mACC), and mean intersection over unit (mIoU).

\begin{table}[t]
    \centering
    \caption{Semantic segmentation results on the Area 5 of S3DIS. \textdagger~denotes the result reported in RI-MAE~\cite{su2025ri}.}\label{tab:indoor_seg}
    \resizebox{\columnwidth}{!}{{\tiny
    \begin{tabular}{l|ccc}
    \toprule[0.4pt]
    &&&\\[-1.5em]
    Models & OA & mAcc & mIoU \\
    &&&\\[-1.2em]
    \hline
    &&&\\[-1.13em]
    \hline
    &&&\\[-0.8em]
    DGCNN~\cite{wang2019dynamic} w/ SO(3) Aug & 74.78 & 45.06 & 35.78 \\
    KPConv~\cite{thomas2019kpconv} w/ SO(3) Aug & 84.50 & 64.66 & 57.42\\
    RIConv++~\cite{zhang2022riconv++} & 84.68 & 66.41 & 56.94 \\
    RI-MAE~\cite{su2025ri}\textsuperscript{\textdagger} & - & - & 60.3 \\
    &&&\\[-1.0em]
    ECKConv-Normal & \textbf{86.38} & \textbf{70.38} & \textbf{61.80} \\[-0.2em]
    \bottomrule[0.4pt]
    \end{tabular}}}
    \vspace{-0.5em}
\end{table}

We compared ECKConv to DGCNN and KPConv trained with SO(3) augmentation and rotation invariant baselines, \eg, RIConv++~\cite{zhang2022riconv++} and RI-MAE~\cite{su2025ri}.
\Cref{tab:indoor_seg} shows that the degradation by SO(3) augmentation was severe enough to drop the performance of non-equivariant methods lower than the rotation-invariant baselines;
it necessitates the SE(3) symmetric model scalable enough to learn large-scale problems.
Indeed, our ECKConv successfully conducted the segmentation as in the left of \Cref{fig:indoor_seg} and reached the best performance in every metric, especially 61.80 in mIoU, which is the state-of-the-art performance among rotation-invariant methods in the S3DIS segmentation to the best of our knowledge.

We also conducted a simulation where chair objects from ModelNet40, out-of-distribution data to S3DIS, were randomly placed in the \textit{hallway 8} of Area 5.
As shown in the right of \Cref{fig:indoor_seg}, our method discerned the chairs regardless of their poses on the floor.
Please refer to \Cref{fig:supp_simul} in the supplementary material for more qualitative simulation results.
They verified that not only ECKConv is scalable enough but also its locally SE(3) equivariant features plays an essential role in resolving large-scale problems.

\section{Conclusion}
We propose ECKConv, SE(3) equivariant convolutional networks for point clouds using the intertwiner framework.
We formalize the kernel domain as the double coset space of SE(3) acted by SO(2) subgroups and acquire the parameters defining SE(3)-invariant and disjoint orbits, guaranteeing the equivariance and injectiveness of the kernel domain.
Utilizing the coordinate-based networks with the Gaussian embedding,
the double coset parameters from the normalized receptive field in a ball shape of ECKConv are mapped to the weights for the basis kernel maps,
enhancing the capability to learn local geometries.
We also reformulate the explicit kernel computation and grant the memory scalability to ECKConv.
Extensive experiments in the ModelNet40, ShapeNet, and S3DIS empirically validated the enhanced performance and efficiency of ECKConv, which achieved the best performance among the state-of-the-art equivariant or rotation-invariant point cloud methods in diverse synthetic and real-world problems.
\newpage
\paragraph{Acknowledgement}
This work was partly supported by grants from the IITP (RS-2021-II211343-GSAI/10\%, RS-2022-II220951-LBA/15\%, RS-2022-II220953-PICA/15\%), the NRF (RS-2024-00353991-SPARC/15\%, RS-2023-00274280-HEI/15\%), the KEIT (RS-2025-25453780/15\%), and the KIAT (RS-2025-25460896/15\%), funded by the Korean government.
{
    \small
    \bibliographystyle{ieeenat_fullname}
    \bibliography{main}
}

\clearpage
\setcounter{page}{1}
\maketitlesupplementary

\begin{figure*}[t]
    \centering
    \includegraphics[width=\textwidth]{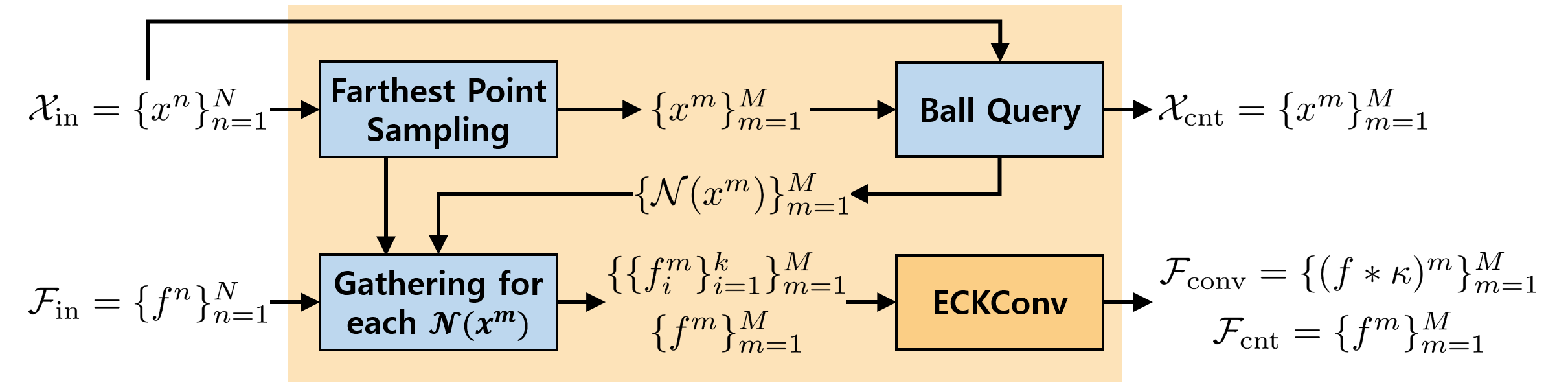}
    \caption{Detailed architectures of ECKConv block. We substitute abstract variables from \Cref{fig:layer_arch} with actual computational variables.}
    \label{fig:supp_layer_detail}
\end{figure*}

\begin{algorithm}[t]
\caption{The coset augmentation from the coordinates}\label{alg:supp_coset_gen}
\begin{algorithmic}[1]
\Require coordinates set $\mathbf{X} \in \mathbb{R}^{N \times 3}$, neighborhood size $K$
\Ensure augmented cosets $\mathbf{N} \in \mathbb{R}^{N \times 3},\;\;\;\text{s.t.}\;\;\mathbf{N} \subset S^2$
\State $\big\{\mathbf{x}^{n},\;\{\mathbf{x}^{n}_{i}\}^{K}_{i=1}\big\}^{N}_{n=1}\;\!\gets\;\!\text{K-NN}(\mathbf{X},\mathbf{X};K)$
\For{$n=1\;\text{to}\;N$}
    \State $\{\mathbf{n}^{n}_{1},\cdots,\mathbf{n}^{n}_{K}\}\;\gets\;\{\mathbf{x}^{n}_{1}-\mathbf{x}^{n},\cdots,\mathbf{x}^{n}_{K}-\mathbf{x}^{n}\}$
    \State $\mathbf{n}^{n}\;\gets\;\frac{1}{K}\sum_{i}\mathbf{n}^{n}_{i}$
    \If{$\;\;\|\mathbf{n}^{n}\|_{2}\;>\;1\mathrm{e}-5\;\;$}
        \State $\hat{\mathbf{n}}^{n}\;\gets\;\dfrac{\mathbf{n}^{n}}{\|\mathbf{n}^{n}\|_{2}}$
    \Else
        \State $\hat{\mathbf{n}}^{n}\,\gets$ the least eigenvector of $\text{PCA}(\{\mathbf{n}^{n}_{i}\}^{K}_{i=1})$
    \EndIf
\EndFor
\State $\mathbf{N} \gets \big[\hat{\mathbf{n}}^{1},\cdots,\hat{\mathbf{n}}^{N}\big]^{\top}$\\
\Return $\mathbf{N}$
\end{algorithmic}
\end{algorithm}

\section{Model Architectures and Training Setups}
We explain the details of ECKConv implementation and enumerate the training configurations of our models.
Every ECKConv model is conducted on a single RTX 3090 GPU.
Their architectures are illustrated in \Cref{fig:supp_cls_model,fig:supp_pose_reg_model,fig:supp_part_seg_model,fig:supp_semseg}.
When applying neighboring algorithms specified for point clouds, \eg, farthest point sampling, k-nearest neighbors, ball query, and PCA among point groups, we utilized the implementations from PyTorch3D~\cite{ravi2020pytorch3d}.

\subsection{Details of ECKConv Block}\label{sec:supp_layer_detail}
This section delineates the detailed computation of ECKConv block, which is also illustrated in \Cref{fig:supp_layer_detail}.
Given the input point cloud $\{(x^{n},f^{n})|x^{n}=(\mathbf{x}^{n}, \mathbf{n}^{n}), f^{n}=f(x^{n})\}^{N}_{n=1}$, the centroids for convolution $\{(x^{m}, f^{m})\}^{M}_{m=1}$, where $M \leq N$, are sampled first by farthest point sampling~\cite{qi2017pointnet++} using the coordinate $\mathbf{x}^{m}$s.
Then, a ball query groups the local neighbors with maximal $k$ neighbors $\mathcal{N}(x^{m}) = \{x^{m}_i|x^{m}_{i}=(\mathbf{x}^{m}_{i}, \mathbf{n}^{m}_{i})\}^{k}_{i=1}$ from the input points for each centroid $x^{m}$.
It gathers features in $\{f^{n}\}^{N}_{n=1}$ into $\{\{f^{m}_{i}\}^{k}_{i=1}\}^{M}_{m=1}$ by every $\mathcal{N}(x^{m})$, and the ECKConv layer operates on every neighborhood and its gathered feature, yielding the output features $\{(f \ast \kappa)^{m}\}^{M}_{m=1}$.
After the block, batch normalization and GELU activation are applied on the $\{(f \ast \kappa)^{m}\}^{M}_{m=1}$,
and the pair of $\{(x^{m}, (f \ast \kappa)^{m})\}^{M}_{m=1}$ is propagated as a new point cloud towards the next layer.
The above process is summarized in \Cref{alg:supp_conv_block}.

\subsection{Coset Augmentation of Point Clouds}\label{sec:supp_coset_algo}
If the ground truth normal vector were not given in the inputs,
we heuristically lift the 3D coordinate to the coset in $S^2$ as \Cref{alg:supp_coset_gen}.
It averages every difference between the neighbor points and the centroid and normalize the average into a unit vector unless its norm is not too small.
If it is, it exploits PCA on the neighbors to estimate the normal vector for the centroid.
We also experimented exploiting only the normal vectors estimated by PCA before the ECKConv layers,
but \Cref{alg:supp_coset_gen} empirically performed better in the ModelNet40 experiments.
We applied this to every our models except \textbf{ECKConv-Normal} in the ModelNet40 classification and the S3DIS segmentation experiments. 

\begin{algorithm}[t]
\caption{The process of ECKConv block}\label{alg:supp_conv_block}
\begin{algorithmic}[1]
\Require coordinates $\mathcal{X}_\text{in}\in\mathbb{R}^{N \times 6}$, features $\mathcal{F}_\text{in}\in\mathbb{R}^{N \times C_\text{in}}$
\Ensure $\{x^{m}\}^{M}_{m=1}$, $\{f^{n}\}^{N}_{n=1}$
\State $\{x^{n}\}^{N}_{n=1} \gets \mathcal{X}_\text{in}$; $\{f^{n}\}^{N}_{n=1} \gets \mathcal{F}_\text{in}$
\State $\{x^{m}\}^{M}_{m=1}\gets\text{farthest-point-sampling}(\{x^{n}\}^{N}_{n=1},\,M)$
\State Gather $\{f^{m}\}^{M}_{m=1}$ for $\{x^{m}\}^{M}_{m=1}$
\State $\mathcal{X}_\text{cnt} \gets \{x^{m}\}^{M}_{m=1}$, $\mathcal{F}_\text{cnt} \gets \{f^{m}\}^{M}_{m=1}$
\State $\{\mathcal{N}(x^{m})\}^{M}_{m=1} \gets \text{ball-query}(\{x^{m}\}^{M}_{m=1},\;\{x^{n}\}^{N}_{n=1},\,k)$
\State Gather $\{\{f^{m}_{i}\}^{k}_{i=1}\}^{M}_{m=1}$ for $\{\mathcal{N}(x^{m})\}^{M}_{m=1}$
\State $\{(f\ast\kappa)^{m}\}^{M}_{m=1} \gets \text{ECKConv}(\{x^{m}\}^{M}_{m=1},$\newline
\hspace*{12.6em}$\{\mathcal{N}(x^{m})\}^{M}_{m=1} ,$\newline
\hspace*{12.6em}$\{\{f^{m}_{i}\}^{k}_{i=1}\}^{M}_{m=1})$
\State $\mathcal{F}_\text{conv} \gets \{(f\ast\kappa)^{m}\}^{M}_{m=1}$\\
\Return $\mathcal{X}_\text{cnt},\,\mathcal{F}_\text{cnt},\,\mathcal{F}_\text{conv}$\newline\Comment{$\mathcal{F}_\text{cnt}$ is utilized only in the residual connection}
\end{algorithmic}
\end{algorithm}

\subsection{Point Feature Propagation Module}\label{sec:supp_pfp}
We adopt this module suggested by \citet{qi2017pointnet++} in our segmentation models,
up-sampling point features from a \textit{coarse} point cloud $\mathcal{X}' \in \mathbb{R}^{M \times 3}$ onto a \textit{fine} one $\mathcal{X} \in \mathbb{R}^{N \times 3}$ such that $N > M$.
Querying the top-K nearest points from the coarse point cloud per each fine point,
local features from queried points are gathered with normalized weights reciprocal to distances and concatenated to features of the query point:
\begin{equation}
\begin{gathered}
    f^\text{FP}_i = \frac{\sum^K_{k=1}w_{ik} f'_{ik}}{\sum^K_{k=1}w_{ik}},\;\;w_{ik} = \frac{1}{\|x_i - x'_{ik}\|^2_2},
\end{gathered}
\end{equation}
where $x_i \in \mathcal{X}$, and $x'_{ik} \in \mathcal{X}'$ is queried to $x_i$ by KNN with corresponding feature $f'_{ik}$.
Then the upsampled feature $f^\text{FP}_i$ is concatenated to $f_i$, the feature of $x_i$ before the feature propagation,
and the module propagates a new point feature set $\big\{\big[f_i, f^\text{FP}_i\big]\big\}^N_{i=1}$ to the following ECKConv layer.
We set $K=1$ and $K=3$ for object part and indoor semantic segmentation model each.

\subsection{Training Configurations}

\subsubsection{ModelNet40 Classification}
We let every baseline training follow its original configuration of training: the number of input points, whether to utilize a normal vector, augmentation except SO(3) rotation, and other optimizer hyperparameters.
For our method, we sampled 1024 points for each point cloud and applied the scaling augmentation during the training, \ie, the scales of the object in XYZ directions were randomly rescaled in range $(\frac{2}{3},1.5)$.
Each minibatch contained 16 point clouds, and its cross entropy loss was smoothed with parameter 0.2 and minimized by Adam optimizer, whose initial learning rate was $1\mathrm{e}-4$ and scheduled to $1\mathrm{e}-6$ by cosine annealing scheduler during 200 epochs.

\subsubsection{ModelNet40 Pose Registration}
Our ECKConv layers extracted local features from the two distinct inputs, named the source and target.
We adopted Deep Closest Point~\cite{wang2019deep} to determine the proper interpolation value of source coordinates that match the target coordinates.
A variant of the transformer encoder-decoder structure plays a critical role in DCP to find the interpolation score between points.
Every deep learning module here shares its parameters for both source and target.
Finally, the relative pose $\mathbf{T}_\text{pred}=(\mathbf{R}_\text{pred}, \mathbf{t}_\text{pred})$ from the source to target can be readily computed by applying the singular value decomposition and mean difference.
Following \citet{wang2019deep}, the following loss was minimized:
\begin{equation}
    \mathcal{L}(\mathbf{T}_\text{pred},\mathbf{T}_\text{gt})=\|\mathbf{R}_\text{pred}\mathbf{R}_\text{gt}^\top-\mathbf{I}\|^2_2 + \|\mathbf{t}_\text{pred}-\mathbf{t}_\text{gt}\|^2_2.
\end{equation}
The training was conducted for 50 epochs, and each minibatch contains 16 pairs without augmentation.
Adam optimizer minimizes the loss with $1\mathrm{e}-4$ initial learning rate scheduled to $1\mathrm{e}-6$ by cosine annealing scheduler.

\subsubsection{ShapeNet Part Segmentation}
We configured the number of points in a point cloud to 2048 in every model for fair comparison,
since the number of samples in a point cloud affects the IoU metrics.
It might influence the performance discrepancy of PRLC~\cite{mondal2023equivariant} between its reported result and our experiment, whose number of points is 1024 in its original implementation.
We applied an identical augmentation protocol from our classification model training.
A cross-entropy loss was computed on every point of each input and smoothed with parameter 0.2.
The loss on minibatch containing 16 point clouds was minimized by Adam optimizer with hyperparameter $\beta_1=0.5$, and its initial learning rate was scheduled from $1\mathrm{e}-4$ to $1\mathrm{e}-6$ by the cosine annealing during 250 epochs.

\subsubsection{S3DIS Semantic Segmentation}
As stated in \Cref{sec:indoor_semseg}, we adopted the training protocols by \citet{zhang2022riconv++,dong2022autoencoders,su2025ri} to crop a $4\,\!m^2$ region and sample 4096 points within it.
A single room in the area was resampled multiple times in a single epoch to cover its whole scene with the cropped regions.
The cropped regions were randomly scaled in range $(\frac{2}{3},1.5)$ along the XY directions and flipped along the X-axis with probability 0.5 during the training.
Besides, the label weight for cross-entropy loss was precomputed since the imbalance among the semantic category is severe in the S3DIS.
The cross entropy loss was minimized by Adam optimizer with learning rate $1\mathrm{e}-4$ during 80 epochs.

\section{Setups for CSEConv Baselines}
Since CSEConv~\cite{kim2024continuous} is our main baseline with respect to the scalability,
we detail the setups for CSEConv baselines in the pose registration and part segmentation, which were not experimented in its original paper.
Every CSEConv baseline followed the initial feature generation protocol suggested by \citet{kim2024continuous}.

\subsection{Pose Registration}
As stated in the main paper, it shared the identical model and training configurations to the ECKConv model, e.g., the basis dimension of coordinate-based networks (though CSEConv uses random Fourier feature~\cite{rahimi2007random}), the number of layers, the dimension of each hidden layer, the cardinality of neighbors per layer, and training hyperparameters.
However, 6 GPUs were required to train this model in our workstation.
It shifted both source and target point clouds to the origin by subtracting geometric means since CSEConv only preserves SO(3) symmetry.
After the extraction, the model replaced down-sampled point clouds to the original center coordinates before DCP method.

\subsection{Part Segmentation}
Our CSEConv baseline also followed U-Net style architecture but had only three down-sampling and up-sampling layers each.
It shared the identical training hyperparameters to the ECKConv experiment except for the batch size and training epochs,
which reduced to 6 point clouds and 100 epochs.
This configuration also utilized 6 GPUs for the training in our workstation.

\subsection{Scalability Comparisons}
\begin{table}[t]
    \centering
    \caption{Memory and time costs during the training, compared between ECKConv and CSEConv baselines from the pose registration and part segmentation tasks.}
    \resizebox{0.85\columnwidth}{!}{{\small
    \begin{tabular}{c|c|cc|cc}
        \toprule[0.9pt]
        & Models & Memory (GB) $\downarrow$ & \#Batch/sec $\uparrow$ & batch size & \#params\\
        \hline
        \multirow{4}{*}{\STAB{\rotatebox[origin=l]{90}{{\fontfamily{ptm}\selectfont pose reg}}}} & \multirow{2}{*}{CSEConv} & \multirow{2}{*}{39.09} & \multirow{2}{*}{3.37} & \multirow{2}{*}{16} & \multirow{2}{*}{10.8M} \\
        &&&&& \\[-0.5em]
        & \multirow{2}{*}{ECKConv} & \multirow{2}{*}{5.37} & \multirow{2}{*}{7.39} & \multirow{2}{*}{16} & \multirow{2}{*}{4.1M} \\
        &&&&& \\[-0.1em]
        \hline
        \hline
        \multirow{4}{*}{\STAB{\rotatebox[origin=l]{90}{{\fontfamily{ptm}\selectfont part seg}}}} & \multirow{2}{*}{CSEConv} & \multirow{2}{*}{66.46} & \multirow{2}{*}{2.04} & \multirow{2}{*}{6} & \multirow{2}{*}{14.9M} \\
        &&&&& \\[-0.5em]
        & \multirow{2}{*}{ECKConv} & \multirow{2}{*}{10.26} & \multirow{2}{*}{3.80} & \multirow{2}{*}{16} & \multirow{2}{*}{22.5M} \\[0.8em]
        \bottomrule[0.9pt]
    \end{tabular}}}
    \label{tab:supp_cost}
\end{table}
We additionally measure the cost of CSEConv baselines during the training and compare them with our models in \Cref{tab:supp_cost}.
Since the training setups which affected their efficiency varied among experiments,
their batch and parameter sizes are reported as well.

We observed the consistency in their training efficiency,
which matches the scalability statement in Proposition \ref{propo}.
When models shared the identical architecture as the pose registration task,
ECKConv was more efficient than CSEConv both in memory and time costs.
Our method also consumed fewer resources than CSEConv while processing more parameters and inputs in the part segmentation experiment.
These results supplement the scalability of ECKConv,
which effectively exploits computation resources and is scalable to obtain sufficient model capacity.
\newpage
\begin{figure*}[t]
    \centering
    \begin{subfigure}[b]{\textwidth}
    \centering
    \includegraphics[width=\textwidth]{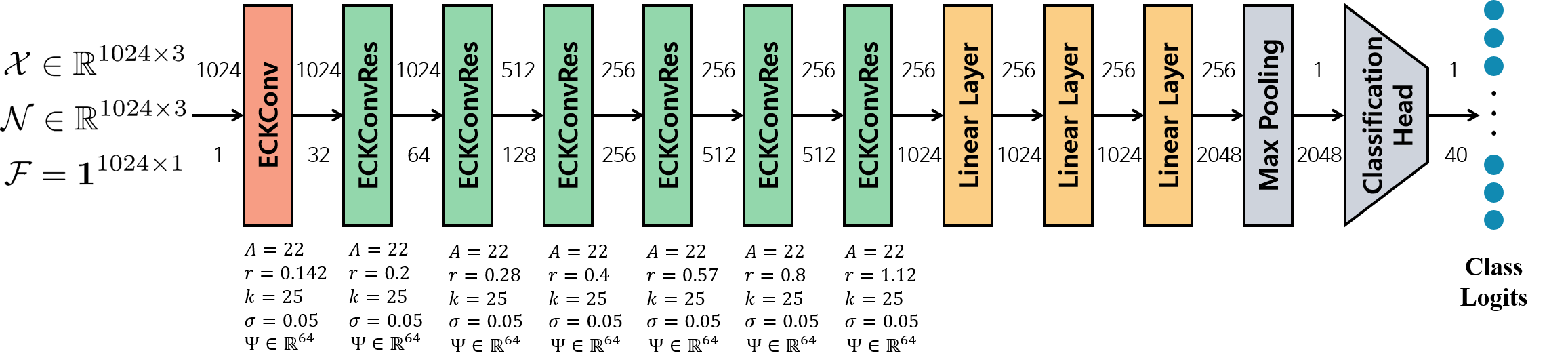}
    \caption{\label{fig:supp_cls_model}Classification model.}\vspace{0.6cm}
    \end{subfigure}
    \centering
    \begin{subfigure}[b]{\textwidth}
    \centering
    \includegraphics[width=\textwidth]{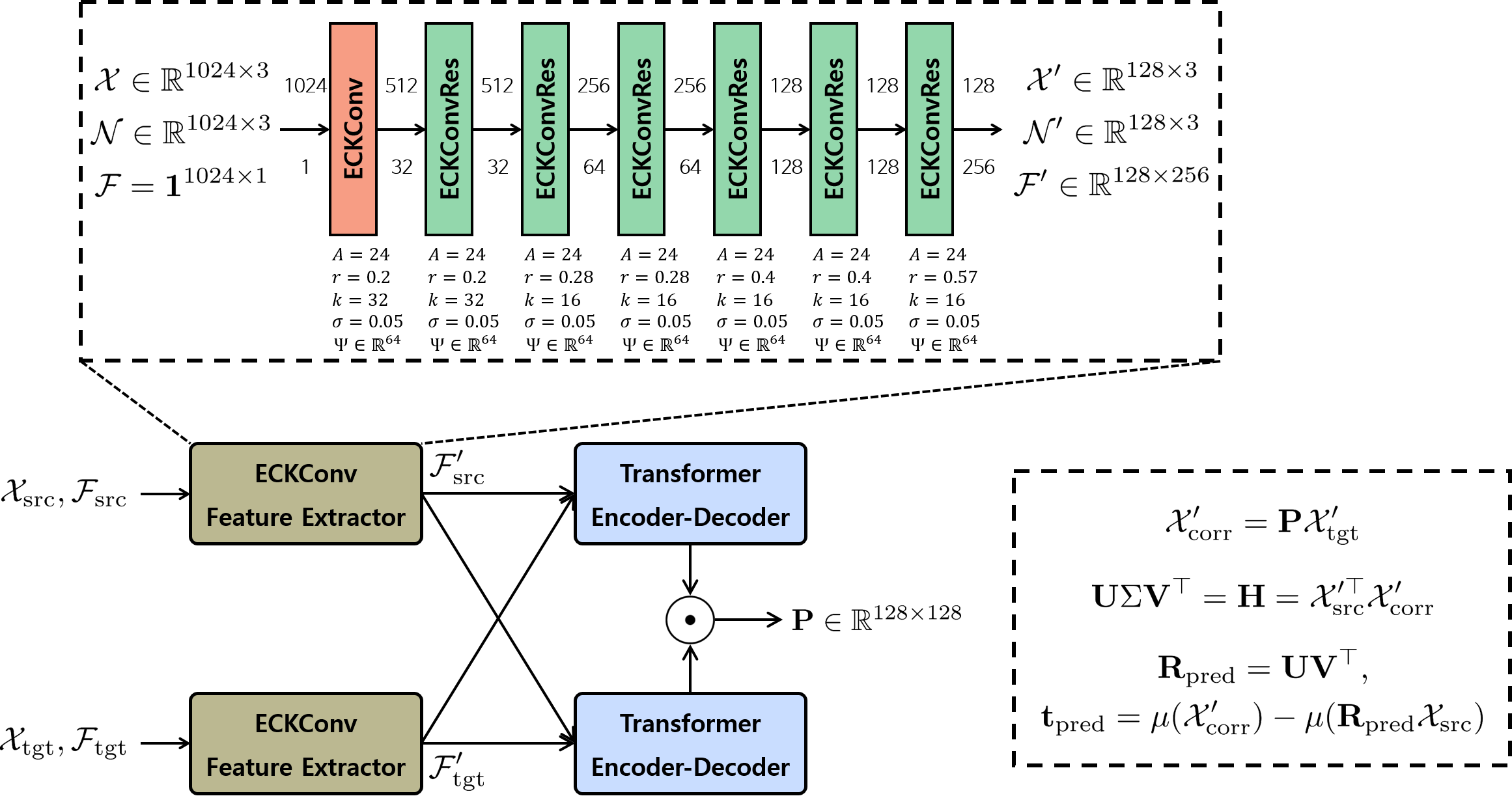}
    \caption{\label{fig:supp_pose_reg_model} Pose registration model.}
    \end{subfigure}
    \caption{Architectures of ECKCov models for ModelNet40~\cite{wu20153d} experiments. Since point clouds only contain 3D coordinates, we initialize its point feature as one vector per every point. An arrow between ECKConv layers designates the number of points and the dimension of each feature on its above and below. We also specify ECKConv hyperparameters below, such as the number of anchor points ($A$), a ball query radius ($r$), the maximal number of sampled points ($k$), the standard deviation of Gaussian kernel ($\sigma$), and the basis dimension of Gaussian embedding ($\Psi$)~\cite{zheng2021rethinking,zheng2022trading}. Linear layers map every point feature with weight-sharing networks, and we omit batch normalization and activation next to these layers in the above figures.}\label{fig:supp_modelnet_models}
\end{figure*}

\newpage

\begin{figure*}[t]
    \centering
    \begin{subfigure}[b]{0.85\textwidth}
    \centering
    \includegraphics[width=\textwidth]{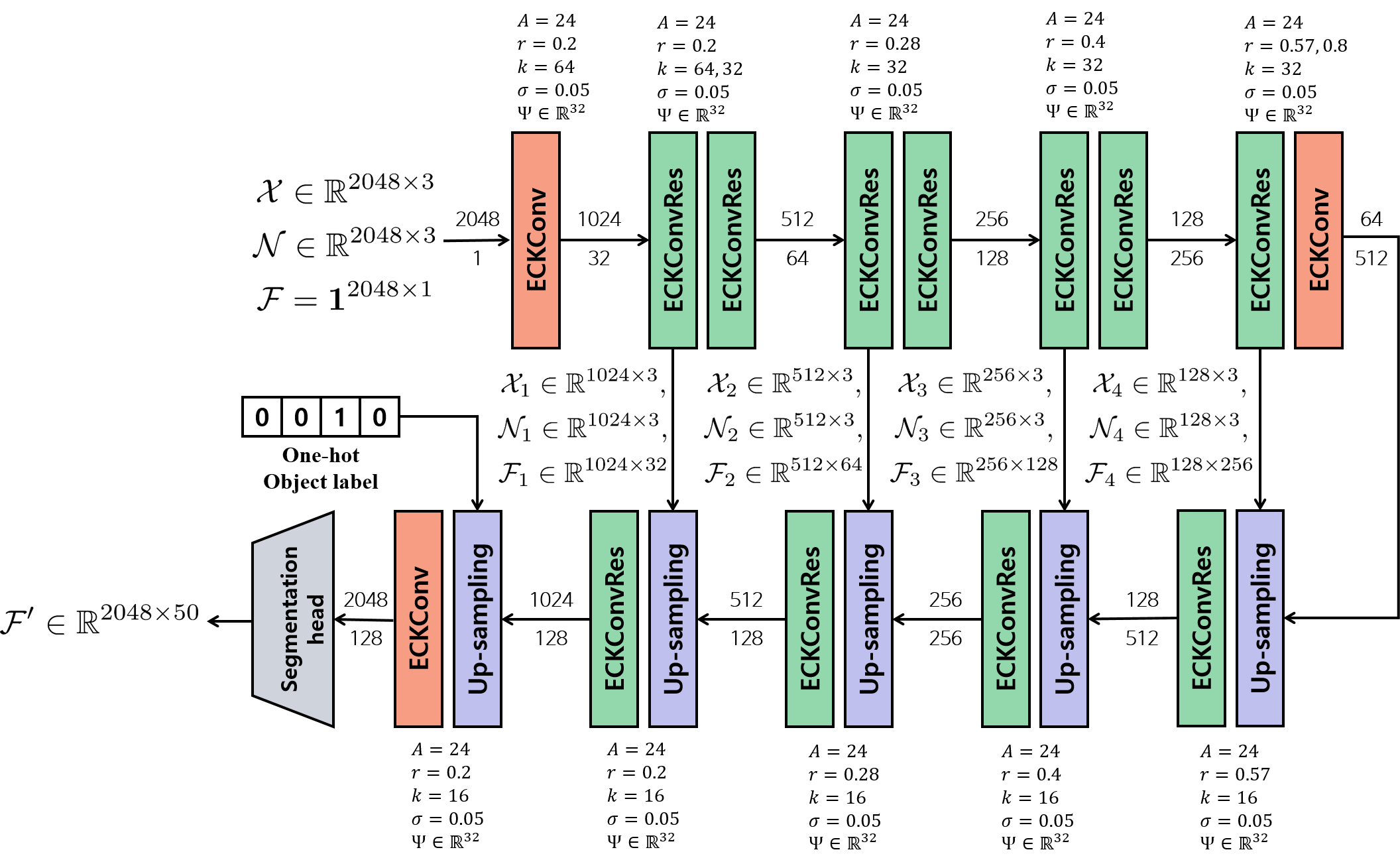}
    \caption{\label{fig:supp_part_seg_model} Part segmentation model for ShapeNet~\cite{chang2015shapenet}.}\vspace{0.4cm}
    \end{subfigure}
    \centering
    \begin{subfigure}[b]{\textwidth}
    \centering         
    \includegraphics[width=\textwidth]{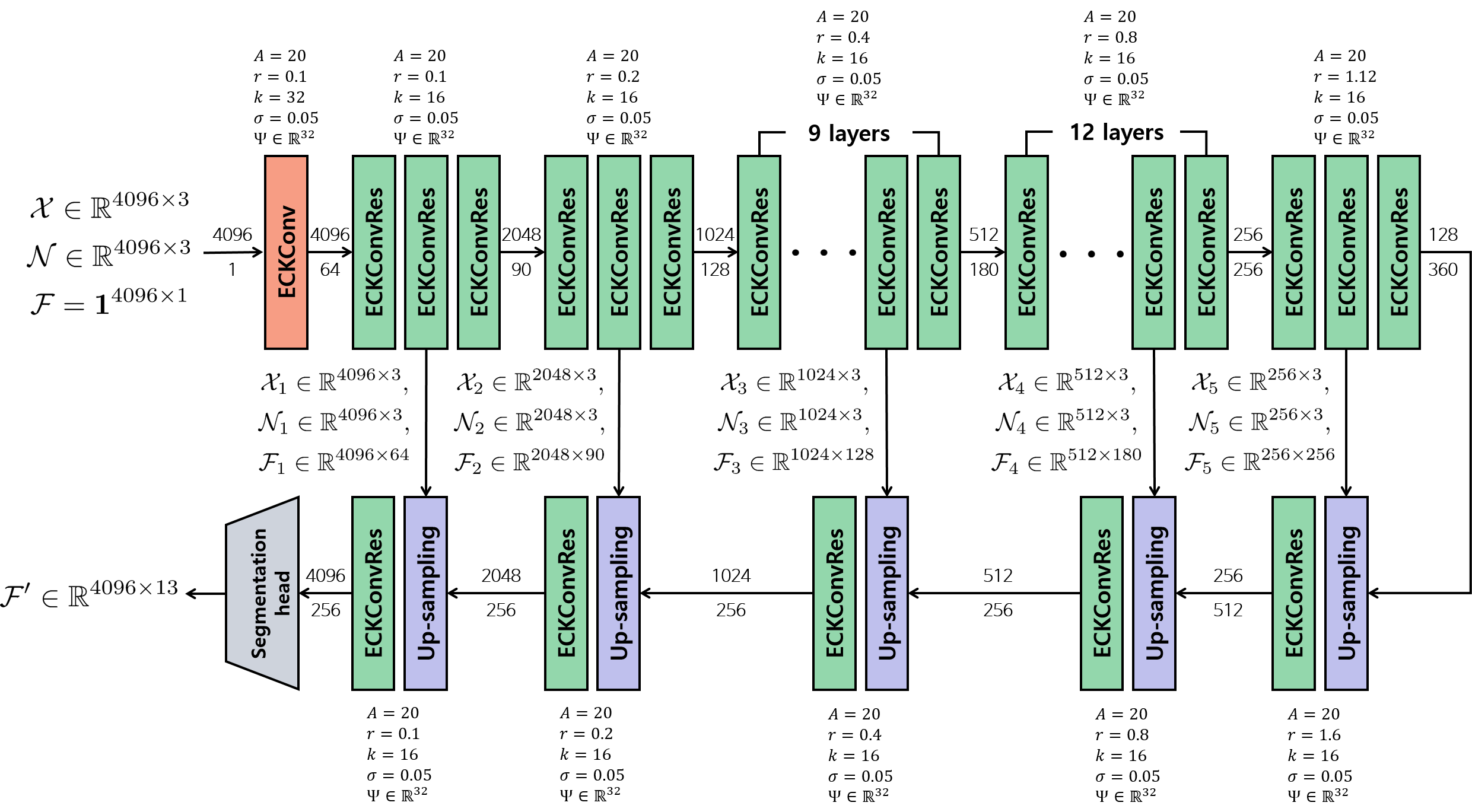}
    \caption{\label{fig:supp_semseg} Semantic segmentation model for S3DIS~\cite{armeni20163d}.}
    \end{subfigure}
    \caption{Architectures of ECKConv models fors segmentation tasks in the ShapeNet and S3DIS. We applied U-Net~\cite{ronneberger2015u} style architecture and up-sampled point features using Point Feature Propagation~\cite{qi2017pointnet++} module, which assigns coarse point features to its top-K closest neighbors skipped from previous layers, weighted by the reciprocal of the point distance in the coordinate space.}\label{fig:supp_applications}
\end{figure*}

\begin{figure*}[t]
    \centering
    \includegraphics[trim={0.5cm 0.5cm 0.5cm 0.5cm}, clip, width=0.9\textwidth]{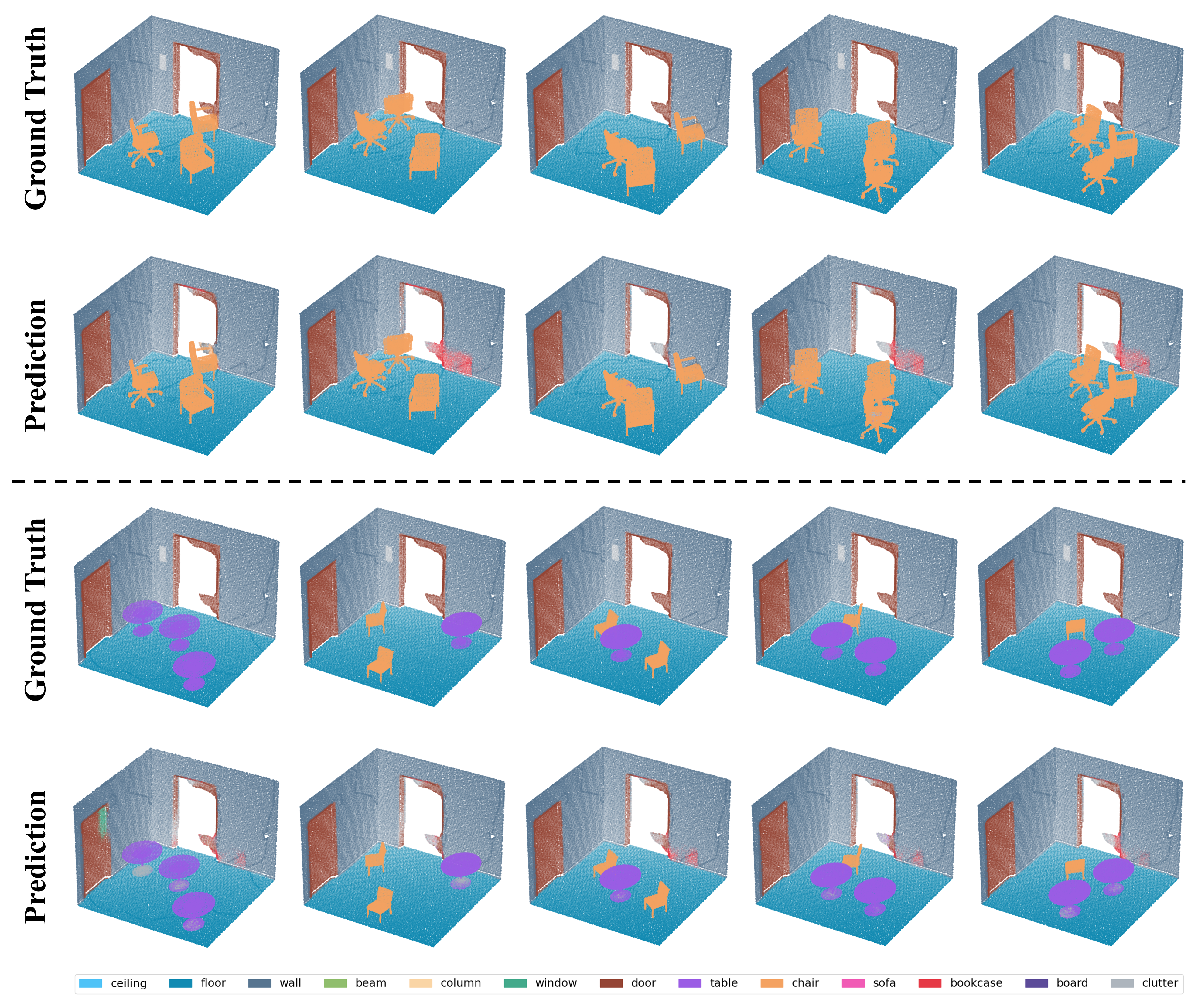}
    \caption{Qualitative results in S3DIS simulation experiments. (top) We diversified the objects from the chair category of ModelNet40. Our model consistently succeeded to discern chair objects with different shapes and poses.
    (bottom) We diversified the object category, adding the table object from ModelNet40. Our model showed consistent results on segmenting the table objects regardless of their numbers or poses, although noisy results on the bottom part of tables were also observed.}\label{fig:supp_simul}
\end{figure*}

\section{Ablations on Hyperparameter Sensitivity}
\begin{table}[t]
    \centering
    \caption{Ablation experiments on the hyperparamters of ECKConv in the ModelNet40 classification task, where models received randomly rotated inputs only during the evaluation ($I/\text{SO(3)}$). Default values are $A=22$, $\Psi\in\mathbb{R}^{64}$, and $K=32$.}\label{rebuttal:tab}
    \resizebox{\columnwidth}{!}{
    \begin{tabular}{c|cc||cc||cc}
    \toprule[0.9pt]
        \multirow{2}{*}{\STAB{Ablation\\Studies}} & \multicolumn{2}{c||}{\#Anchor Bases} & \multicolumn{2}{c||}{$\text{Gau}(\cdot)$ Bandwidth} & \multicolumn{2}{c}{\#Neighbors for Alg 1.} \\
                                    & $A=1$ & $A=64$ & $\Psi \in \mathbb{R}^{4}$ & $\Psi \in \mathbb{R}^{256}$ & $K=8$ & $K=128$ \\
                                    \hline
        Acc (\%)                    & 89.22 & 90.19 & 88.86 & 90.32 & 90.32 & 89.95 \\
        Mem (GB) $\downarrow$       & 4.36  & 6.57  & 1.90  & 14.51 & 4.90  & 4.90 \\
        $\#\text{Batch}/\text{sec}$ $\uparrow$ & 10.20 & 5.62  & 10.65 & 2.69  & 7.58  & 6.83 \\
    \bottomrule[0.9pt]
    \end{tabular}}
\end{table}
As the kernel structure and the coordinate lifting to coset space of ECKConv have diverse hyperparameters,
we conducted ablation experiments in the ModelNet40 classification task to verify the sensitivity of ECKConv against such hyperparameters.
The experiments varied the number of anchor bases ($A$), the bandwidth of Gaussian embedding ($\Psi$), and the number of neighbors during the coset lifting in \Cref{alg:supp_coset_gen} ($K$) to extreme values.
\Cref{rebuttal:tab} demonstrates that the efficiencies of the variants were affected significantly when the hyperparameters related to the kernel ($A,\,\Psi$) varied,
but their performances remained relatively robust.
We conjecture that linear layers from skip connections in our method help maintain their performances.

\section{Discussion on the Limitations}
Although ECKConv proposes an SE(3)-symmetric and scalable convolution applicable to large-scale point cloud problems,
its kernel is isotropic to SE(3) actions due to the utilization of the double coset element,
which is composed of coset elements grouped by left actions of SO(2),
and the confinement to scalar-type features.
These factors decrease the directional expressivity of ECKConv to certain types of point cloud tasks where the point feature should be anisotropic, \eg, normal estimation, molecule structure prediction, and n-body problems.
Expanding the resolvable domain of ECKConv would be an important future research direction that balances our scalable architecture with the inefficient yet expressive kernel designs from the previous steerable convolution studies.

\end{document}